\theoremstyle{plain}
\newtheorem{theorem}{Theorem}[section]
\theoremstyle{definition}
\theoremstyle{remark}
\def\cS{{\mathcal{S}}}
\def\cD{{\mathcal{D}}}
\def\cA{{\mathcal{A}}}
\def\hP{\hat{P}}
\def\hM{{\widehat{\mathcal{M}}}}
\def\hrho{{\hat{\rho}}}
\def\EE{{\mathbb{E}}}
\icmltitlerunning{Contrastive Representation for Data Filtering}
\definecolor{codegreen}{rgb}{0,0.6,0}
\definecolor{codegray}{rgb}{0.5,0.5,0.5}
\definecolor{codepurple}{rgb}{0.58,0,0.82}
\definecolor{backcolour}{rgb}{0.95,0.95,0.92}
\lstdefinestyle{mystyle}{
    backgroundcolor=\color{backcolour},   
    commentstyle=\color{codegreen},
    keywordstyle=\color{magenta},
    numberstyle=\tiny\color{codegray},
    stringstyle=\color{codepurple},
    basicstyle=\ttfamily\footnotesize,
    breakatwhitespace=false,         
    breaklines=true,                 
    captionpos=b,                    
    keepspaces=false,                 
    numbers=left,                    
    numbersep=2pt,                  
    showspaces=false,                
    showstringspaces=false,
    showtabs=false,                  
    tabsize=2
}
\begin{document}

\twocolumn[
\icmltitle{
Contrastive Representation for Data Filtering in Cross-Domain \\ Offline Reinforcement Learning
}



\icmlsetsymbol{equal}{*}

\begin{icmlauthorlist}
\icmlauthor{Xiaoyu Wen}{nwpu}
\icmlauthor{Chenjia Bai}{pjlab,sznwpu}
\icmlauthor{Kang Xu}{fdu}
\icmlauthor{Xudong Yu}{hit}
\icmlauthor{Yang Zhang}{thu}
\icmlauthor{Xuelong Li}{telecom}
\icmlauthor{Zhen Wang}{nwpu}
\end{icmlauthorlist}

\icmlaffiliation{nwpu}{Northwestern Polytechnical University}
\icmlaffiliation{pjlab}{Shanghai Artificial Intelligence Laboratory}
\icmlaffiliation{hit}{Harbin Institute of Technology}
\icmlaffiliation{fdu}{Fudan University}
\icmlaffiliation{thu}{Tsinghua University Shenzhen Research Institute}
\icmlaffiliation{telecom}{The Institute of Artificial Intelligence (TeleAI), China Telecom}
\icmlaffiliation{sznwpu}{Shenzhen Research Institute of Northwestern Polytechnical University}


\icmlcorrespondingauthor{Chenjia Bai}{baichenjia@pjlab.org.cn}
\icmlcorrespondingauthor{Zhen Wang}{w-zhen@nwpu.edu.cn}

\icmlkeywords{Machine Learning, ICML}

\vskip 0.3in
]



\printAffiliationsAndNotice{}  

\begin{abstract}
Cross-domain offline reinforcement learning leverages source domain data with diverse transition dynamics to alleviate the data requirement for the target domain. However, simply merging the data of two domains leads to performance degradation due to the dynamics mismatch. Existing methods address this problem by measuring the dynamics gap via domain classifiers while relying on the assumptions of the transferability of paired domains. In this paper, we propose a novel representation-based approach to measure the domain gap, where the representation is learned through a contrastive objective by sampling transitions from different domains. We show that such an objective recovers the mutual-information gap of transition functions in two domains without suffering from the unbounded issue of the dynamics gap in handling significantly different domains. Based on the representations, we introduce a data filtering algorithm that selectively shares transitions from the source domain according to the contrastive score functions. Empirical results on various tasks demonstrate that our method achieves superior performance, using only 10\% of the target data to achieve 89.2\% of the performance on 100\% target dataset with state-of-the-art methods. 
\end{abstract}
\vspace{-2em}

\section{Introduction}
Offline Reinforcement Learning (RL) \cite{BatchRL,BCQ,offlineReview,bai2021pessimistic,yang2022rorl,bai2024pessimistic} exhibits a distinctive advantage over online RL, leveraging previously collected offline data without requiring any additional online interactions. In real-world scenarios like robotic manipulation \cite{feng2023finetuning,shi2023robust}, autonomous driving \cite{zhang2023learning}, and healthcare \cite{fatemi2022semi}, gathering a substantial offline dataset with good coverage of transitions for a specific environment is time-consuming and expensive \cite{alberti2020idda,kuo2021synthetic,bridgeV2}. 
Nevertheless, the offline RL algorithms rely heavily on the data coverage of the offline dataset \cite{zhan2022offline,deng2023false}, and the performance degenerates significantly if the amount of offline data decreases. To tackle this challenge for a specific target domain with scarce data, cross-domain offline RL leverages additional source domain data with dynamics shift to compensate for the (target) offline dataset \cite{dara,bosa}. However, as we illustrated in Figure~\ref{fig:intro motivation}(a), simply combining the dataset from source and target domains induces a significant dynamics shift due to the dynamics mismatch, leading to policy divergence and poor performance \cite{cds,uds}. Therefore, how to appropriately incorporate source domain data to improve the data efficiency for learning in the target domain remains a challenge.

There are two key problems for cross-domain offline RL: how to \emph{measure the domain gap} and how to \emph{utilize the cross-domain data}. For the first problem, prior methods directly estimate the dynamics models with offline datasets or training domain discriminators to approximate the dynamics gap. Nevertheless, the dynamics model suffers from large extrapolation errors given limited target domain data, and domain discriminators fail to provide smooth measurement for the dynamics discrepancy. For example, the dynamics gap (i.e., $\log P_{\rm source}/P_{\rm target}[s'|s,a]$) can be unbounded when the two domains mismatch significantly \cite{vgdf}. For the second problem, previous approaches modify the rewards according to the estimation of dynamics discrepancy \cite{dara} or employ pessimistic supported constraints for the source domain data \cite{bosa}. Despite these progresses, these methods typically experience rapid performance degradation when confronted with a larger dynamics gap, as shown in Figure \ref{fig:intro motivation}(b).

\begin{figure*}[t]
    \centering
    \includegraphics[width=\textwidth]{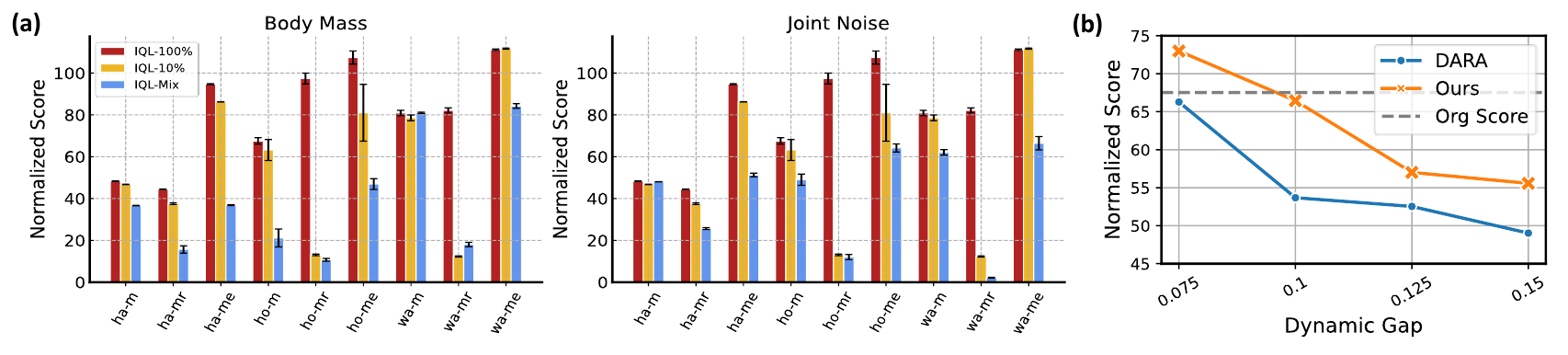}
    \vspace{-1em}
    \caption{(a) Comparison of performance across five seeds in nine Mujoco tasks (ha: halfcheetah, ho: hopper, wa: walker2d, m: medium, mr: medium-replay, me: medium-expert) with IQL \cite{iql}. We set standard D4RL \cite{d4rl} as the target domain data. For the source domain, we modify environmental parameters, such as altering body mass or introducing joint noise, and then collect offline datasets in the modified environments. (IQL-100\%: use 100\% target data, IQL-10\%: use reduced 10\% target data, IQL-Mix: use 10\% target data and 100\% source data.) (b) Comparison of performance between our algorithm and DARA \cite{dara} with 100\% source-domain dataset and 10\% target-domain dataset in the Hopper-Medium-v2 when facing the increasing dynamics gap. Specifically, we simulate a process of increasing dynamics gaps by continuously increasing the head size in the Hopper-v2 environment. The x-axis is the head size of the Hopper-v2 (normal size is 0.05), and "Org Score" is the original performance of IQL when using 100\% target data.}
    \label{fig:intro motivation}
\end{figure*}

In this paper, we propose a novel perspective to measure the domain gap via the mutual information (MI) of transitions. Specifically, we adopt the MI between the joint distribution of state-action pairs and the next states to capture the underlying dynamics of environmental transitions. Then, we use the MI gap between the source and target domains as a robust characterization of domain discrepancy when the data is shared from a significantly different source domain. In practice, such an MI gap can be estimated via a contrastive objective by using the positive samples from the target domain and the negative samples from the source domain. We employ the learned contrastive representation that captures the domain-distinguishable information as a data filter, which selectively shares the transitions from the source domain with small MI gaps to the target domain. Theoretical analysis shows that reducing the MI gap via data filtering reduces the performance bounds of two domains. Under mild assumptions, the proposed MI gap also recovers the expected dynamics gap without explicit dynamic estimators or domain discriminators.

We name the proposed method the Info-Gap Data Filtering (\textbf{IGDF}) algorithm. Empirically, we evaluate IGDF in various D4RL environments \cite{d4rl} with kinematic and morphology shifts \cite{dara,vgdf}, showcasing its superior performance compared to previous state-of-the-art algorithms. As an example of cross-domain offline RL in Figure~\ref{fig:intro motivation}(b), the MI gap used in IGDF is more robust than the dynamics gap in DARA \cite{dara}, especially for shared domains with large dynamics gaps. Our code is available in this repository (\url{https://github.com/BattleWen/IGDF}).

\section{Preliminaries}

The RL problem is typically formulated as a Markov Decision Process (MDP), defined by a tuple $\mathcal{M} = (\mathcal{S}, \mathcal{A}, P, r, \gamma, \hrho_0)$, where $\mathcal{S}$ and $\mathcal{A}$ denote the state and action spaces, $P(s'|s, a)$ is the transition dynamics, $r(s, a)$ is the reward function, $\gamma \in [0, 1)$ is the discount factor, and $\hrho_0: \mathcal{S} \rightarrow [0, 1]$ is the initial state distribution.

In the offline RL setting, the agent does not interact with the environment and learns a policy from an offline dataset \cite{offlineReview}. Considering a target MDP $\mathcal{M}_{\rm tar}=(\mathcal{S}, \mathcal{A}, P_{\rm tar}, r, \gamma, \hrho_0)$ has limited dataset $\mathcal{D}_{\rm tar}$. In cross-domain offline RL, we assume to access another offline dataset $\mathcal{D}_{\rm src}$ collected on a source domain MDP $\mathcal{M}_{\rm src}=(\mathcal{S}, \mathcal{A}, P_{\rm src}, r, \gamma, \hrho_0)$. We assume that all of these MDPs share the same state space, action space, and reward function and only differ in the transition probabilities, i.e., $P_{\rm src}(s'|s, a)$ and $P_{\rm tar}(s'|s, a)$. The goal of cross-domain offline RL is to leverage the additional source-domain dataset $\mathcal{D}_{\rm src}$ to relax the data requirements of the target domain. The policy is learned to maximize the expected return over the target environment $\mathcal{M}_{\rm tar}$ using the static cross-domain offline data $\mathcal{D}_{\rm mix}:= \mathcal{D}_{\rm src} \cup \mathcal{D}_{\rm tar}$.

In the offline setting, we further define the empirical MDP that estimates the expectation of the transition function $P(s'|s,a)$ from the offline dataset. Formally, an empirical MDP estimated from $\mathcal{D}$ is $\hM:=(\mathcal{S}, \mathcal{A}, \hat{P}, r, \gamma, \hrho_0)$, where $\hat{P} = \max_{\hat{P}} \mathbb{E}_{s,a,s' \sim \mathcal{D}} \left[ \log \hat{P}(s' \mid s,a) \right] $ is estimated by the maximum log-likelihood, and $\hat{P}(s' \mid s,a) = 0$ for all $(s,a,s')$ not in dataset $\mathcal{D}$. Then the empirical MDPs for the source domain and target domain are defined as $\hM_{\rm src}=(\cS,\cA,\hP_{\rm src},r,\gamma,\hrho_0)$ and $\hM_{\rm tar}=(\cS,\cA,\hP_{\rm tar},r,\gamma,\hrho_0)$, respectively. We assume the two datasets follow the same behavior policy $\pi^b(a|s)$ (refer to Appendix \ref{app:more discussions} for more details). In source MDP, $\hrho_{\rm src}(s)$ is the normalized probability that the policy $\pi^b_{\rm src}$ encounters state $s$, defined as $\hrho_{\rm src}(s)\triangleq (1-\gamma) \sum_{t=1}^{\infty}\gamma^t \hP_{\rm src}(s_t=s|\pi^b)$, and $\hrho_{\rm tar}(s)$ for the target domain follows a similar formulation.

\section{Methodology}
In this section, we first introduce the proposed MI gap, which measures the domain gap in cross-domain offline RL. Then we give a contrastive objective to estimate such a gap with learned representations. Next, we give a data filtering method to leverage the source domain data based on the representations and score functions. Finally, we give the theoretical analysis for the proposed algorithm.

\subsection{The MI Gap for Cross-Domain RL}

In the following, we denote the information measure $I(\cdot;\cdot)$ as MI and $H(\cdot)$ as Shannon entropy. We use the uppercase letter (e.g., $X$) for random variables and the lowercase letter (e.g., $x$) for their realizations. We aim to adopt the MI term to capture the dynamics-relevant information about different domains. For a distribution over the transition tuple $(s,a,s')$, we use $S,A,S'$ to stand for the corresponding random variables. We also use $p$ to denote the joint distribution of these variables as well as their associated marginals. Then the MI between the state-action pair $(S,A)$ and their future state $S'$ is defined as 
\begin{equation}
I([S,A];S')=\EE_{s,a,s'\sim \cD}\left[\log \frac{p(s,a,s')}{p(s,a) p(s')}\right],
\end{equation}
where $p(s,a)$, $p(s')$ and $p(s,a,s')$ follow empirical distributions according to the offline dataset $\cD$. For the source domain and target domain with different datasets (i.e., $\cD_{\rm src}$ and $\cD_{\rm tar}$), we denote the MI objective estimated in two domains as $I_{\rm src}([S,A];S')$ and $I_{\rm tar}([S,A];S')$, respectively. Then the MI gap between the two domains is defined as 
\begin{equation}
\Delta I = I_{\rm tar}([S,A];S') - I_{\rm src}([S,A];S'),
\end{equation}
where the two MI terms follow the different conditional and marginal probabilities. Specifically, we have
\begin{equation}
\begin{aligned}
I_{\rm tar}([S,A];S')&=\EE_{\cD}\big[\log p(s,a,s') \big/ [p(s,a) p(s')]\big]
\\&=\EE_{\cD}\big[\log \hP_{\rm tar}(s'|s,a) \big/ \hrho_{\rm tar}(s')\big],
\end{aligned}
\label{eq:mi-tar}
\end{equation}
where $\hP_{\rm tar}(s'|s,a)$ is the empirical transition function in the target-domain dataset, and $\hrho_{\rm tar}(s')$ denotes the normalized state distribution. We utilize maximum likelihood estimation in a given dataset $\mathcal{D}$ to fit $\hat{P}_{\text{tar}}(s' \,|\, s, a)$. If we denote the parameter of the empirical distribution $\hat{P}_{\text{tar}}$ by $\theta$, then the empirical distribution can be obtained by $\hat{P}_{\text{tar}}(s' \,|\, s, a) = \text{argmax}_{\theta} \sum_{(s_i,a_i,s'_i)\sim \mathcal{D}} \log \hat{P}_{\text{tar}}(s'_i| s_i, a_i ; \theta).$ The expectation in Eq.~\eqref{eq:mi-tar} follows $(s,a,s')\sim \cD$, where $\cD$ is the actual dataset for sampling transitions. For example, $\cD=\cD_{\rm tar}$ when the policy is trained with the target-domain dataset. In contrast, if the shared data from the source domain is used for training the target-domain policy, then we have $\cD=\cD_{\rm src}$. The MI term $I_{\rm src}$ of the source domain follows a similar form as Eq.~\eqref{eq:mi-tar}, but with $\hP_{\rm src}(s'|s,a)$ and $\hrho_{\rm src}(s')$ that are estimated in the source-domain dataset.

When the two domains are significantly different, the proposed MI gap $\Delta I$ is more robust than the dynamics ratio (i.e., $\Delta P=\EE_{\cD_{\rm src}}[\log \hP_{\rm tar}/ \hP_{\rm src}]$) in cross-domain data sharing. Specifically, when the samples $\{(s,a,s')\}$ from $\cD_{\rm src}$ are shared to the target domain, the probability of $\hP_{\rm tar}(s'|s,a)\rightarrow 0$ since the two domains have very different transition functions, which makes $\Delta P\rightarrow -\infty$. In contrast, the $\Delta I$ term is lower-bounded by the state entropy of behavior policies, as $\Delta I\geq -I_{\rm src}([S,A];S')\geq -H(\hrho_{\rm src}(s'))$. An illustration of the MI gap with significantly different domains is shown in Figure~\ref{fig:MI-gap}.

\begin{figure}[t]
    \centering
    \includegraphics[width=0.49\textwidth]{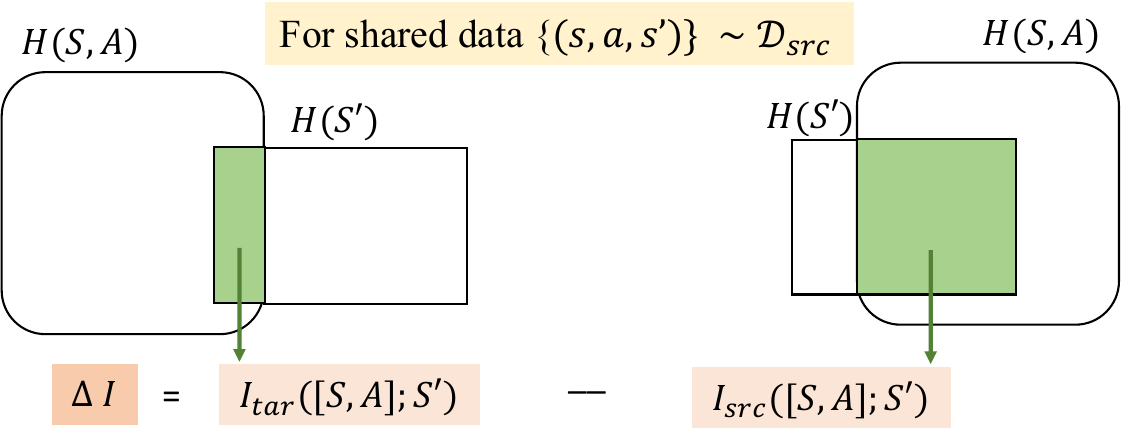}
    \vspace{-1em}
    \caption{An illustration of the MI gap of data shared from $\cD_{\rm src}$.}
    \label{fig:MI-gap}
\end{figure}

\subsection{Contrastive Representation for the MI Gap}

To estimate the MI gap in high-dimensional state space, a tractable variational estimator based on neural networks is required \cite{vabound, infoNCE, yang2023behavior}. We adopt contrastive learning to estimate the MI objective. A naive approach requires two independent estimators for $I_{\rm tar}$ and $I_{\rm src}$ separately. In contrast, we simplify this process by adopting a single contrastive objective to estimate $\Delta I$ directly. Specifically, we choose transitions $(s,a,s'_B)\sim \cD_{\rm tar}$ from the target domain as positive samples. The negative samples are obtained by first sampling a state-action pair $(s, a)\sim \cD_{\rm tar}$ and then sampling a negative state set $S'^-$ from the source domain $\cD_{\rm src}$ independently. Then a negative sample is obtained by concatenating them together to form a tuple $(s, a, s'_A)$, where $s'_A\in S'^-$. The contrastive objective can be expressed as
\begin{equation}
\label{equ:infonce}
\begin{aligned}
\mathcal{L}_{\text{NCE}} &= - \mathbb{E}_{p(s,a,s'_B)} \mathbb{E}_{S'^{-}} \\
&\left[ \log\frac{h_2(s,a,s'_B)}{h_2(s,a,s'_B)+\sum_{s'_A\in S'^{-}} h_1(s,a, s'_A)} \right],
\end{aligned}
\end{equation}
where we use two score functions to measure the information density ratio which preserves the MI between $(s,a)$ and $s'$ for the source and target domains, respectively. Intuitively, the score functions assign scores representing an exponential correlation between the state-action pair and the next state in the corresponding domains. Formally, we aim to approximate the information density of the target domain $h_2(s,a,s'_B) \propto \hP_{\rm tar}(s'_B|s,a) \big/ \hrho_{\rm tar}(s'_B)$ and source domain $h_1(s,a,s'_A) \propto \hP_{\rm src}(s'_A|s,a) \big/ \hrho_{\rm src}(s'_A)$, respectively \cite{infoNCE}.

\begin{figure*}[t]
    \centering
    \includegraphics[width=\textwidth]{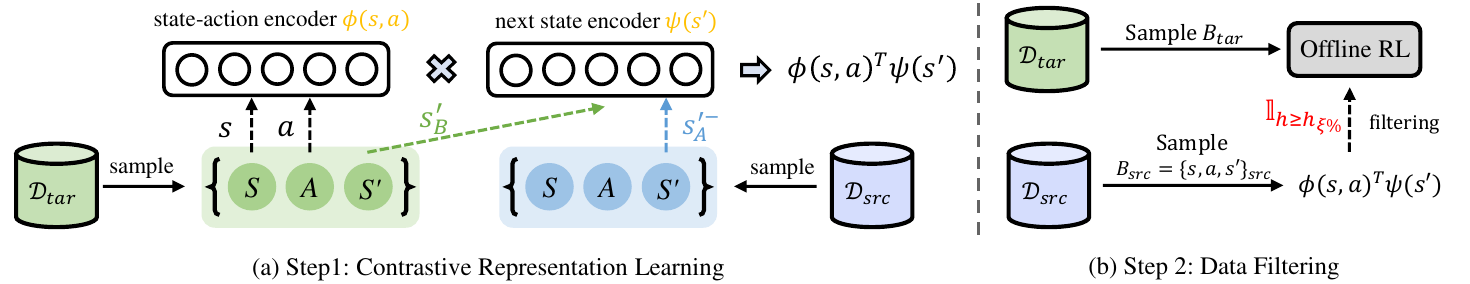}
    \vspace{-1em}
    \caption{Illustration of our method. (a) We train two encoder networks using contrastive learning, treating target transitions as positive examples and constructed transitions as negative examples. (b) We tackle cross-domain offline RL by selectively sharing the source domain data with the score functions. The target data and the share data are used for offline RL algorithms to learn the policy.}
    \label{fig:method}
\end{figure*}

The following theorem shows that the proposed contrastive objective serves as an approximate estimation for the MI gap with sufficient negative samples.
\begin{theorem}[InfoNCE extension]
\label{the:MI maximization}
The MI gap $\Delta I = I_{\rm tar}([S,A];S') - I_{\rm src}([S,A];S')$ can be lower bounded by the negative contrastive objective, as
\begin{equation}
\Delta I \geq \log(K-1) -\mathcal{L}_{\rm NCE} \coloneqq I_{\rm NCE},
\label{equ:delta_MI}
\end{equation}
where $K-1$ is the number of negative samples from the source domain.
\end{theorem}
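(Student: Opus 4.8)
The plan is to prove this as an extension of the standard InfoNCE lower bound (van den Oord et al., CPC) to the two-domain, two-critic construction in Eq.~\eqref{equ:infonce}. Write $x=(s,a)$; the positive pair $(x,s'_B)$ is drawn from the target transition, i.e.\ $s'_B\sim\hP_{\rm tar}(\cdot\given x)$ under the (shared) behavior state-action marginal, and the negative next-states $s'_{A,1},\dots,s'_{A,K-1}$ are drawn i.i.d.\ from the source next-state marginal $\hrho_{\rm src}$, independently of $x$. The first step is to pass to the \emph{ideal} score functions, i.e.\ the density ratios the objective is designed to recover, $h_2^{\star}(x,s')=\hP_{\rm tar}(s'\given x)/\hrho_{\rm tar}(s')$ and $h_1^{\star}(x,s')=\hP_{\rm src}(s'\given x)/\hrho_{\rm src}(s')$. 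By Eq.~\eqref{eq:mi-tar} and its source analogue, $\EE_{(x,s'_B)}[\log h_2^{\star}(x,s'_B)]=I_{\rm tar}([S,A];S')$ and the corresponding source density-ratio expectation is $I_{\rm src}([S,A];S')$, so that $\Delta I=\EE_{(x,s'_B)}\big[\log h_2^{\star}(x,s'_B)-\log h_1^{\star}(x,s'_B)\big]$. This identity is the conceptual heart of the statement: a \emph{single} contrastive objective targets the MI \emph{gap} rather than either MI in isolation precisely because the effective critic is the ratio $h_2^{\star}/h_1^{\star}$.

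Next I would run the InfoNCE-style manipulation at these critics. Substituting gives $\mathcal{L}_{\rm NCE}=\EE\big[\log\!\big(1+h_2^{\star}(x,s'_B)^{-1}\sum_{i}h_1^{\star}(x,s'_{A,i})\big)\big]$, and combining with the displayed expression for $\Delta I$, the quantity $\Delta I-\log(K-1)+\mathcal{L}_{\rm NCE}$ collapses (after multiplying through by $h_2^{\star}/h_1^{\star}$ inside the logarithm) to $\EE\big[\log\!\big(h_1^{\star}(x,s'_B)^{-1}\big(h_2^{\star}(x,s'_B)+\sum_i h_1^{\star}(x,s'_{A,i})\big)\big)\big]-\log(K-1)$. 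Discarding the nonnegative summand $h_2^{\star}(x,s'_B)$ inside the logarithm — this is the step that degrades $\log K$ to $\log(K-1)$ — leaves $\EE\big[\log\sum_i h_1^{\star}(x,s'_{A,i})\big]-\EE_{(x,s'_B)}[\log h_1^{\star}(x,s'_B)]$, which I want to bound below by $\log(K-1)$. For the first term I would use that each negative score has unit mean, $\EE_{s'\sim\hrho_{\rm src}}[h_1^{\star}(x,s')]=\int\hP_{\rm src}(s'\given x)\,ds'=1$, so the sum concentrates at $K-1$ for large $K$; for a clean one-sided inequality one can instead invoke AM--GM, $\sum_i h_1^{\star}\ge(K-1)\big(\prod_i h_1^{\star}\big)^{1/(K-1)}$, and take expectations. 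The residual then has to be reconciled with $-I_{\rm src}$ by matching the source reference measure in $h_1^{\star}$ to the distribution of the negatives.

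The step I expect to be the main obstacle is exactly this cross-domain, two-critic accounting: the negative slots carry $h_1^{\star}$ rather than the effective critic $h_2^{\star}/h_1^{\star}$, and the negative states come from $\hrho_{\rm src}$ while the positive state-action pair comes from the target marginal, so closing the argument requires carefully tracking which empirical distributions appear (here the shared-behavior-policy assumption is what aligns the state-action marginals) so that the correction produced by the negative-sample term is \emph{exactly} $-I_{\rm src}([S,A];S')$ and not a related but different divergence. It is also the reason the statement is an inequality that is tight only for large $K$: the slack from dropping the ``$+1$'' and from the finite-sample fluctuation of $\sum_i h_1^{\star}(x,s'_{A,i})$ about its mean vanishes as $K\to\infty$, matching the ``approximate estimation with sufficient negative samples'' remark preceding the theorem. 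A final bookkeeping point is that the chain of inequalities only uses the density-ratio \emph{form} of $h_1,h_2$ (with their common normalization), not strict optimality, so the bound transfers to the learned encoders and hence to $\mathcal{L}_{\rm NCE}$ as actually computed.
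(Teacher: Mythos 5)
Your proposal follows essentially the same route as the paper's own proof: substitute the ideal density ratios for $h_1,h_2$, discard the positive term in the denominator (the paper drops the ``$1$'', which is what degrades $\log K$ to $\log(K-1)$), and apply AM--GM --- which is literally the paper's Jensen step $\log\bigl(\frac{1}{K-1}\sum_i x_i\bigr)\ge\frac{1}{K-1}\sum_i\log x_i$ --- before identifying the negative-sample average with $I_{\rm src}$ in the large-$K$ limit. The ``main obstacle'' you flag, namely that the negatives are drawn from $\hrho_{\rm src}$ while the state--action pair comes from the target marginal so the correction term is only approximately $-I_{\rm src}$, is exactly the step the paper resolves with an unqualified ``$\approx$'', so your accounting is, if anything, the more careful of the two.
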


The term $I_{\text{NCE}}$ is an asymptotically tight lower bound for the MI gap, i.e., $\lim \limits_{K\rightarrow \infty} I_{\text{NCE}}^K(X;Y) \rightarrow \Delta I(X;Y)$, which becomes tighter as $K$ becomes larger \cite{vabound, tightMI}. We refer to \S\ref{app:theoretical2} for full derivations. We illustrate the contrastive learning process in Figure~\ref{fig:method}.

For optimizing the contrastive objective in Eq.~\eqref{equ:infonce}, we adopt two score functions (i.e., $h_1$ and $h_2$) to estimate the information density of source and target domains, respectively. (\romannumeral1) $h_2(\cdot)$ only takes samples $\{(s,a,s'_B)\}$ from the target domain as inputs and assigns \textbf{high} scores to $h_2(s,a,s'_B)$, (\romannumeral2) $h_1(\cdot)$ only assigns \textbf{low} scores to $h_1(s,a,s'_A)$ for samples $\{(s,a,s'_A)\}$, where $(s,a)\in \cD_{\rm tar}$ and
$s'_A \in \cD_{\rm src}$. Interestingly, compared to training two independent contrastive estimators for $I_{\rm src}$ and $I_{\rm tar}$, our objective in Eq.~\eqref{equ:infonce} has neither negative samples for score function $h_2$, nor positive samples for score function $h_1$, which provide an opportunity to further integrate the effects of $h_1$ and $h_2$ into a single score function $h$. The new score function $h$ uses $\{(s,a,s'_B\}$ from the target domain as positive samples and constructed transitions $\{(s,a,s'_A)\}$ from two domains as negative samples. Then we have a simplified objective as
\begin{equation}
\widehat{\mathcal{L}}_{\text{NCE}} = - \mathbb{E}_{p(s,a,s'_B)} \mathbb{E}_{S'^{-}} \left[ \log\frac{h(s,a,s'_B)}{\sum_{s'_A\in S'^{-} \cup s'_B} h(s,a, s'_A)} \right],
\label{equ:infonce-simple}
\end{equation}
which serve as a simplified version of $\mathcal{L}_{\text{NCE}}$. The main reason of using a single score function is that we only share data from the source domain to the target domain (i.e., $\cD_{\rm src}\rightarrow \cD_{\rm tar}$) without a reverse data-sharing process. Thus, we do not require an independent score function to distinguish whether a transition comes from the source domain, but only required to distinguish whether a shared transition is similar to the data distribution of the target domain.

For implementation, we use two neural networks $\phi(s,a)$ and $\psi(s')$ to learn representations of state-action pairs and states only. Then we adopt a linear parameterization as
\begin{equation}
h(s,a,s') = \exp (\phi(s,a)^\top \psi(s')),
\end{equation}
which resembles spectral decomposition and low-rank representation of transition functions \cite{uehara2022representation,ren2023spectral,ren2023latent}, while we use $h(s,a,s')$ to approximate the information density. The representations are normalized to $\lVert \phi(\cdot) \rVert, \lVert \psi(\cdot) \rVert = 1$, which makes $h(\cdot)\in[1/e,e]$. In cross-domain data sharing, the score function $h$ assigns high scores for source domain data that follows a similar information density (i.e., $\hP_{\rm tar}(s'_B|s,a) \big/ \hrho_{\rm tar}(s'_B)$) to the target domain data and assigns low scores for transitions that have significantly different distributions to the target domain. 

\subsection{Data Filtering via Contrastive Representation}

Based on the representations and score function, we obtain a practical data filtering algorithm, termed IGDF (\textbf{I}nfo-\textbf{G}ap \textbf{D}ata \textbf{F}iltering), that leverages additional data with dynamics gap from the source domain to train a policy for the target MDP. Specifically, after training the encoder networks $\phi(\cdot)$ and $\psi(\cdot)$ by optimizing $\widehat{\mathcal{L}}_{\text{NCE}}$, we sample a batch of data $\{(s_A, a_A,s'_A)\}$ from $\cD_{\rm src}$ and rank the transitions according to the value of $\phi(s_A, a_A)^\top \psi(s'_A)$, then we extract the top $\xi$-quantile of batch samples for data sharing. The shared data is mixed with a batch of target domain data for offline RL training. The algorithmic description of IGDF is presented in Algorithm \ref{alg:IGDF}. In practice, data sharing is more convenient than modifying the reward function of shared data for pessimism, as it eliminates the need for meticulous adjustments to clip ranges and reward scaling ratios.

\begin{algorithm}[tb]
\caption{IGDF: Info-Gap Data Filtering algorithm}
\label{alg:IGDF}
\textbf{Input:} Source domain data $\mathcal{D}_{\rm src}$, target domain data $\mathcal{D}_{\rm tar}$
\textbf{Initialize:} Policy $\pi$, value function $Q$, encoders $\phi(s,a)$, $\psi(s')$, data filter ratio $\xi$, importance ratio $\alpha$, batch size $B$ \\
\vspace{-\baselineskip}
\begin{algorithmic}[1]
\STATE \textit{// Contrastive Representation Learning}
\STATE Optimizing the contrastive objective in Eq.~\eqref{equ:infonce-simple} to train the encoder networks $\phi(s,a)$ and $\psi(s')$
\STATE \textit{// Data Filtering algorithm}
\FOR{each gradient step}
\STATE Sample a batch $b_{\rm src}:=\{\left( s, a, r, s' \right)\}^{\frac{B}{2\xi}}$ from $\mathcal{D}_{\rm src}$
\STATE Sample a batch $b_{\rm tar}:=\{\left( s, a, r, s' \right)\}^{\frac{B}{2}}$ from $\mathcal{D}_{\rm tar}$
\STATE Sample the top-$\xi$ samples from $b_{\rm src}$ ranked by $h(\cdot)$
\STATE Combine top-$\xi$ samples in $b_{\rm src}$ and all samples in $b_{\rm tar}$
\STATE Optimize the value function $Q_\theta$ via Eq.~\eqref{equ:iw} 
\STATE Learn the policy $\pi(a|S)$ via offline RL algorithms
\ENDFOR
\end{algorithmic}
\end{algorithm}

To further enhance the performance, we propose a variant of our method by weighting the Temporal-Difference (TD)-error of filtered data using the score function. Formally, we train the value function as
\begin{align}
\label{equ:iw}
&\mathcal{L}_Q(\theta) = \frac{1}{2} \mathbb{E}_{\left(s, a, s^{\prime}\right)\sim \mathcal{D}_{\text{\rm tar}}}\left[\left(Q_{\theta}-\mathcal{T} Q_{\theta}\right)^2\right] + 
\\
&\frac{1}{2} \alpha \cdot h(s, a, s') \mathbb{E}_{\left(s, a, s^{\prime}\right)\sim \mathcal{D}_{\text{\rm src}}}\left[ \omega(s, a, s') \left(Q_{\theta}-\mathcal{T} Q_{\theta}\right)^2\right],\nonumber
\end{align}
where $\alpha$ is the importance coefficient for weighting the TD-error with the score function, and $\omega(s, a, s'):=\mathbbm{1}\left( h(s, a, s') > h_{\xi \%} \right)$ perform data filtering to extract samples with top $\xi$-quantile scores in the mini-batch sampled from the source domain. In Eq.~\eqref{equ:iw}, $\mathcal{T} Q_{\theta}$ is a general Bellman operator of the offline RL algorithms. It is also worth noting that IGDF can serve as an add-on module algorithm for arbitrary offline RL algorithms, and we select IQL as the base algorithm in experiments. The detailed procedure of IGDF+IQL is given in \S\ref{app-alg}. 

\subsection{Theoretical Analysis}

\textbf{Connection to Dynamics Gap.} The previous methods \cite{darc,dara} for cross-domain adaptation often adopt the dynamic ratio to measure the dynamics gap. In the following, we give a connection between the dynamics gap and the proposed MI gap with transitions from different domains.
\begin{theorem}
\label{the:dg}
For shared data from the source domain $\widehat{M}_{\rm src}$, i.e., $(s,a,s')\in \cD_{\rm src}$, the relationship between the MI gap and dynamics gap is
\begin{equation}
\label{eq:deltaI-s}
\!\!\Delta I \!=\! D_{\rm KL}[\hrho_{\rm src}(s')\|\hrho_{\rm tar}(s')]-D_{\rm KL}[\hP_{\rm src}(s'|s,a)\|\hP_{\rm tar}(s'|s,a)].
\end{equation}
In contrast, for data from the target domain $\widehat{M}_{\rm tar}$, the relationship between the MI gap and dynamics gap is
\begin{equation}
\!\!\Delta I \!=\! D_{\rm KL}[\hP_{\rm tar}(s'|s,a)\|\hP_{\rm src}(s'|s,a)] - D_{\rm KL}[\hrho_{\rm tar}(s')\|\hrho_{\rm src}(s')].
\end{equation}
Then, the MI gap is bounded by 
\begin{equation}
\label{MI-bound}
-H(\hrho_{\rm src}(s'))\leq \Delta I \leq H(\hrho_{\rm tar}(s')).
\end{equation}
\end{theorem}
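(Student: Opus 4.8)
The plan is to prove the two identities by a direct manipulation of the definition $\Delta I = I_{\rm tar}([S,A];S') - I_{\rm src}([S,A];S')$, and then read off the two-sided bound from the elementary information-theoretic properties of the two mutual-information terms.

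First I would expand each term using the maximum-likelihood form of Eq.~\eqref{eq:mi-tar}, i.e. $I_{\rm tar} = \EE_{\cD}[\log\hP_{\rm tar}(s'|s,a) - \log\hrho_{\rm tar}(s')]$ and $I_{\rm src} = \EE_{\cD}[\log\hP_{\rm src}(s'|s,a) - \log\hrho_{\rm src}(s')]$, subtract them, and regroup the logarithms into a transition-ratio term and a state-marginal-ratio term:
\[
\Delta I \;=\; \EE_{\cD}\!\left[\log\frac{\hP_{\rm tar}(s'|s,a)}{\hP_{\rm src}(s'|s,a)}\right] \;+\; \EE_{\cD}\!\left[\log\frac{\hrho_{\rm src}(s')}{\hrho_{\rm tar}(s')}\right].
\]
Then I would substitute the two choices of sampling set. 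When $\cD=\cD_{\rm src}$, conditioning on $(s,a)$ makes $s'\sim\hP_{\rm src}(\cdot\mid s,a)$, so the first expectation equals $-\,\EE_{(s,a)}\,D_{\rm KL}[\hP_{\rm src}(s'|s,a)\|\hP_{\rm tar}(s'|s,a)]$ (the state-action-averaged dynamics gap), while the marginal of $s'$ under $\cD_{\rm src}$ is $\hrho_{\rm src}$, so the second expectation equals $D_{\rm KL}[\hrho_{\rm src}(s')\|\hrho_{\rm tar}(s')]$; adding these gives Eq.~\eqref{eq:deltaI-s}. The case $\cD=\cD_{\rm tar}$ is entirely symmetric: now $s'\sim\hP_{\rm tar}(\cdot\mid s,a)$ and the $s'$-marginal is $\hrho_{\rm tar}$, so the first term becomes $+\,D_{\rm KL}[\hP_{\rm tar}(s'|s,a)\|\hP_{\rm src}(s'|s,a)]$ and the second becomes $-\,D_{\rm KL}[\hrho_{\rm tar}(s')\|\hrho_{\rm src}(s')]$, which is the second identity.

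For the two-sided bound in Eq.~\eqref{MI-bound} I would step back to $\Delta I = I_{\rm tar} - I_{\rm src}$ and use only that each term is a genuine mutual information between $(S,A)$ and $S'$ under the corresponding empirical joint (transition kernel $\hP_{\rm tar}$ resp.\ $\hP_{\rm src}$, next-state marginal $\hrho_{\rm tar}$ resp.\ $\hrho_{\rm src}$). Non-negativity of MI gives $I_{\rm src}\ge 0$ and $I_{\rm tar}\ge 0$, and the decomposition $I([S,A];S')=H(S')-H(S'\mid S,A)$ together with $H(S'\mid S,A)\ge 0$ (the empirical distributions are discrete) gives $I_{\rm tar}\le H(\hrho_{\rm tar}(s'))$ and $I_{\rm src}\le H(\hrho_{\rm src}(s'))$. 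Chaining these, $-H(\hrho_{\rm src}(s'))\le -I_{\rm src}\le I_{\rm tar}-I_{\rm src}=\Delta I\le I_{\rm tar}\le H(\hrho_{\rm tar}(s'))$, which is exactly the lower-bound argument quoted in the main text.

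The hard part is not any computation but the bookkeeping about which sampling distribution is in force: the two identities are claims about $\Delta I$ evaluated on a specific dataset, so I must check in each case that the conditional law of $s'$ given $(s,a)$ and the marginal law of $s'$ are precisely the ones that turn the two log-ratio expectations into the stated KL divergences, and I would make explicit the $(s,a)$-average implicit in the notation $D_{\rm KL}[\hP_{\rm src}(s'|s,a)\|\hP_{\rm tar}(s'|s,a)]$ (over the shared behavior-policy occupancy). A secondary subtlety is passing from the dataset-dependent $\Delta I$ of the identities to the ``clean'' MI-difference form used for the bound; I would state the identification $I_{\rm tar}$ = MI under the target joint and $I_{\rm src}$ = MI under the source joint, so the non-negativity and entropy inequalities apply verbatim, and remark (as the paper emphasizes) that this finite lower bound $-H(\hrho_{\rm src}(s'))$ is exactly what the raw dynamics ratio $\log\hP_{\rm tar}/\hP_{\rm src}$ lacks.
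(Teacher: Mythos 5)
Your proof is correct and follows essentially the same route as the paper's: expand $\Delta I$ under the chosen sampling distribution, regroup the logarithms into a transition-ratio and a marginal-ratio term, and identify each with a KL divergence. Two small points of comparison. First, your bound argument is marginally cleaner: you chain $-H(\hrho_{\rm src}(s'))\le -I_{\rm src}\le \Delta I\le I_{\rm tar}\le H(\hrho_{\rm tar}(s'))$ directly from non-negativity of MI, whereas the paper routes through an informal claim that $\Delta I\le 0$ when data come from the source and $\Delta I\ge 0$ when they come from the target (justified only by an ``information density'' heuristic); both arguments, however, silently require that the \emph{cross-evaluated} term (e.g.\ $\EE_{\cD_{\rm src}}[\log \hP_{\rm tar}(s'|s,a)/\hrho_{\rm tar}(s')]$) be non-negative, which is not a genuine mutual information and is exactly the subtlety you flag in your last paragraph --- your explicit identification of $I_{\rm tar}$ and $I_{\rm src}$ with MIs under their own domains' joints is the honest way to make the bound rigorous. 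Second, your sign bookkeeping for the target-domain identity ($-D_{\rm KL}[\hrho_{\rm tar}\|\hrho_{\rm src}]$) matches the theorem statement; the paper's appendix derivation of that case contains a sign slip in its final line (a ``$+$'' where the preceding line forces a ``$-$''), so your version is the one to trust there.
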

We give the detailed proof in \S\ref{app:thm-dg}. According to the Theorem~\ref{the:dg}, the decomposition of the MI gap also contains a KL-term to measure the dynamics gap. Nevertheless, the MI gap has an additional divergence term for state visitation distribution to regularize the dynamics gap. For example, as in Eq.~\eqref{eq:deltaI-s}, when the shared data from $\widehat{M}_{\rm src}$ is significantly different from that of $\widehat{M}_{\rm tar}$, the dynamics gap $-D_{\rm KL}[\hP_{\rm src}(s'|s,a)\|\hP_{\rm tar}(s'|s,a)]\rightarrow -\infty$, while the state density ratio $D_{\rm KL}[\hrho_{\rm src}(s')\|\hrho_{\rm tar}(s')]\rightarrow \infty$ counteracts this effect. Theoretically, we show the MI gap can be bounded by the entropy of state distribution, as in Eq.~\eqref{MI-bound}. As a result, the MI gap overcomes the drawback of the dynamics gap with large domain gaps \cite{dara} and provides a stable measurement for the domain gap.

\paragraph{Performance Guarantee.} Built on the above analysis, we provide a theoretical guarantee for sharing the source domain data from $\widehat{M}_{\rm src}$ to improve the performance of the true MDP $M_{\rm tar}$ in the target domain under the dynamic mismatch. Then we have the following performance bound for any policy $\pi$ in cross-domain offline data sharing:
\begin{theorem}
\label{thm:perf_bound}
Under the setting of corss-domain offline RL, the performance difference of any policy $\pi$ evaluated by the source domain $\widehat{M}_{\rm src}$ and the true target MDP $M_{\rm tar}$ can be bounded as below,
\begin{equation}
\label{eq:perf-bound}
\begin{aligned}
\eta_{{M}_{\rm tar}}(\pi) &\!-\! \eta_{\widehat{M}_{\rm src}}(\pi) \!\geq\! - \dfrac{\gamma R_{\rm max}}{(1-\gamma)^2}\!\Big\{
2\EE_{\hrho_{\rm tar}}\!\!\left[\!D_{\rm TV}\left(P_{\rm tar}\|\hP_{\rm tar}\right)\right] \\&
+ \sqrt{2D_{\rm KL}\left(\hrho_{\rm src}(s')\|\hrho_{\rm tar}(s')\right) + 2|\Delta I|} \Big\}.
\end{aligned}
\end{equation}
\end{theorem}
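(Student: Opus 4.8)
The plan is to split the performance difference $\eta_{M_{\rm tar}}(\pi) - \eta_{\widehat{M}_{\rm src}}(\pi)$ through the empirical target MDP $\widehat{M}_{\rm tar}$ as an intermediate object, writing
\[
\eta_{M_{\rm tar}}(\pi) - \eta_{\widehat{M}_{\rm src}}(\pi)
= \big(\eta_{M_{\rm tar}}(\pi) - \eta_{\widehat{M}_{\rm tar}}(\pi)\big)
+ \big(\eta_{\widehat{M}_{\rm tar}}(\pi) - \eta_{\widehat{M}_{\rm src}}(\pi)\big),
\]
and then lower-bounding each piece separately. For the first bracket, the two MDPs share everything except the transition kernel, so a standard simulation-lemma / telescoping argument over the value function bounds it by $\frac{\gamma R_{\rm max}}{(1-\gamma)^2}\cdot 2\,\EE_{\hrho_{\rm tar}}\!\left[D_{\rm TV}(P_{\rm tar}\|\widehat{P}_{\rm tar})\right]$ in absolute value; this is exactly the first term of the claimed bound and is essentially a restatement of the classical model-error bound (Pinsker-free version, since it is already stated in TV).

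For the second bracket, I would again use a telescoping/simulation identity, but now the discrepancy between $\widehat{M}_{\rm tar}$ and $\widehat{M}_{\rm src}$ is captured by $\EE[D_{\rm TV}(\widehat{P}_{\rm tar}(\cdot|s,a)\|\widehat{P}_{\rm src}(\cdot|s,a))]$ under the appropriate occupancy measure; bounding $|\eta_{\widehat{M}_{\rm tar}}(\pi) - \eta_{\widehat{M}_{\rm src}}(\pi)|$ by $\frac{\gamma R_{\rm max}}{(1-\gamma)^2}$ times (a constant multiple of) that expected TV distance. The remaining task is to convert this expected transition-TV term into $\sqrt{2 D_{\rm KL}(\hrho_{\rm src}(s')\|\hrho_{\rm tar}(s')) + 2|\Delta I|}$. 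Here is where Theorem~\ref{the:dg} enters: from Eq.~\eqref{eq:deltaI-s}, for data from the source domain we have $D_{\rm KL}[\hP_{\rm src}(s'|s,a)\|\hP_{\rm tar}(s'|s,a)] = D_{\rm KL}[\hrho_{\rm src}(s')\|\hrho_{\rm tar}(s')] - \Delta I \leq D_{\rm KL}[\hrho_{\rm src}(s')\|\hrho_{\rm tar}(s')] + |\Delta I|$, and then Pinsker's inequality gives $D_{\rm TV}(\widehat{P}_{\rm src}\|\widehat{P}_{\rm tar}) \leq \sqrt{\tfrac12 D_{\rm KL}(\widehat{P}_{\rm src}\|\widehat{P}_{\rm tar})}$, so (after pulling the square root outside the expectation via Jensen and a factor of two) the expected TV term is controlled by $\sqrt{2 D_{\rm KL}(\hrho_{\rm src}(s')\|\hrho_{\rm tar}(s')) + 2|\Delta I|}$ up to the indicated constants.

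The main obstacle I anticipate is bookkeeping of \emph{which} occupancy measure the expectations are taken under in each telescoping step, and making the constants line up with the factor-of-two in front of the first TV term and inside the square root: the simulation lemma naturally produces expectations under $\rho^\pi$ in one MDP, and one must argue that replacing this with $\hrho_{\rm tar}$ (the behavior-policy state distribution appearing in the statement) is legitimate, presumably using the shared-behavior-policy assumption stated in the preliminaries and absorbing any change-of-measure slack into the $\frac{1}{1-\gamma}$ factors. A secondary subtlety is that $\Delta I$ in Theorem~\ref{the:dg} involves $\hrho_{\rm src},\hrho_{\rm tar}$ as \emph{next-state} marginals, so I must be careful that the $D_{\rm KL}(\hrho_{\rm src}(s')\|\hrho_{\rm tar}(s'))$ appearing additively in Eq.~\eqref{eq:perf-bound} is the same object, which it is by construction. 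Everything else is routine: the value-difference telescoping, Pinsker, and Jensen's inequality for the concave square root.
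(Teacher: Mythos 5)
Your proposal is correct and follows essentially the same route as the paper's proof: the same decomposition through the intermediate empirical target MDP $\widehat{M}_{\rm tar}$, the telescoping/simulation lemma for each bracket, Pinsker plus Jensen to convert the expected transition-TV term into a square root of an expected KL, and the identity from Theorem~\ref{the:dg} (with $-\Delta I \le |\Delta I|$) to express that KL as $D_{\rm KL}(\hrho_{\rm src}\|\hrho_{\rm tar}) - \Delta I$. The occupancy-measure bookkeeping you flag is handled in the paper exactly as you anticipate, via the shared-behavior-policy property of the empirical MDPs.
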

We give the detailed proof in \S\ref{app:thm3}. The first term $D_{\rm TV}\big [P_{\rm tar}\|\hP_{\rm tar}\big]$ of the divergence in Eq.~\eqref{eq:perf-bound} is caused by limited coverage of offline dataset and can be reduced by offline RL algorithms. The second divergence term includes the MI gap and the state distribution of the empirical MDP of source and target domains, which can be reduced by the proposed data filtering algorithms based on the MI gap. And we also provide additional details about a tight sub-optimality gap of IGDF in Appendix \ref{app:sub-optimaliy gap}.

\section{Related Work}
\paragraph{Dynamic adaptation in RL}
The problem of dynamic adaptation focuses on policy adaptation in domains with varying transition dynamics. Prior methods have proposed several design paradigms, including system identification \cite{sys_ident2,sys_ident3,sys_ident4,sys_ident5,sys_ident6} to capture the dynamics and visual properties of the real world, domain randomization methods \cite{domain_random1,domain_random2,domain_random3,domain_random4,domain_random5} that introduce diversity by randomly altering simulation parameters, meta-RL \cite{metarl1,metarl2} that performs fast policy fine-tuning, and imitation learning \cite{chae2022robust,kim2020domain} that learns expert policy. However, these methods require additional online interactions, offline historical transitions, or prior knowledge to select the parameters and the range of randomization. More recent works have explored the online dynamics adaptation given limited offline experiences from the target domain based on dynamics gap \cite{darc} and value function \cite{vgdf}, or via imperfect simulations from the source domain \cite{h2o,h2o+}. In contrast, we explore cross-domain adaptation in a purely offline setting based on the MI of transitions.
\vspace{-1em}
\paragraph{Cross-domain offline RL}
Learning to act from a limited dataset without any possibility of improving exploration is a well-known challenge in offline RL \cite{wen2023towards,zhang2023uncertainty}. Cross-domain offline RL aims to leverage additional source domain data with dynamics shift to contribute to offline RL data efficiency. There are two preliminary problems: how to identify the dynamic discrepancy between source and target domain, and how to leverage source domain offline data. Prior works address the first problem by training two discriminators to evaluate a dynamics gap-related term \cite{darc, dara}, employing a GAN-style discriminator \cite{srpo}, or directly estimating the dynamics models \cite{bosa}. However, the learned dynamics models suffer from large extrapolation errors given limited target domain data, and domain discriminators fail to provide reliable estimation when dynamics shifts are significant \cite{vgdf}. To optimize the efficient reuse of source domain data, previous methods have explored various strategies, such as reward modification \cite{dara, srpo} and pessimistic supported constraints \cite{bosa}, but still encounter certain limitations. Their performance may degrade when confronted with a larger dynamics gap. In contrast, our method employs a representation-based approach to smoothly measure the dynamics gap, avoiding explicit estimation of transition probabilities. Moreover, we propose a score function-based data filtering method to selectively share source domain data, achieving comparable performance with fewer target domain data.

\section{Experiments}
In this section, we present empirical validations of our approach. We examine the effectiveness of our method in scenarios with various dynamics shifts. Furthermore, we provide ablation studies and qualitative analyses of our method. 

\subsection{Datasets and Baselines}
To characterize the offline dynamics shift, we consider the \textit{Halfcheetah}, \textit{Hopper}, and \textit{Walker2d} from the Gym-MuJoCo environment, using offline samples from D4RL as our target offline dataset. For the source dataset, we change the environment parameters by altering the XML file of the MuJoCo simulator following \cite{dara,vgdf} and then collect the Medium, Medium-Replay, and Medium-Expert offline datasets in the changed environments following the same data collection procedure as in D4RL (refer to Appendix \ref{app:datailed experiment setting} for the details).

We compare our algorithms with three state-of-the-art baselines in the cross-domain offline RL setting: (\romannumeral1) DARA \cite{dara} trains a pair of binary classifiers $p({\rm target}|s, a, s')$ and $p({\rm target}|s, a)$ to evaluate dynamics gap-related transition probabilities. (\romannumeral2) SRPO \cite{srpo} gives a constrained optimization formulation that regards the state distribution as a regularizer. (\romannumeral3) BOSA \cite{bosa} proposes supported policy and value optimization, which explicitly regularizes the policy and value optimization with in-support transitions. Notably, both DARA and SRPO can be flexibly applied to a wide range of offline RL algorithms, whereas BOSA stands out as a comprehensive algorithm built upon the SPOT \cite{spot} implementation. More details are given in Appendix \ref{app:implementation details}.

\begin{table*}[ht]
\centering
\caption{Results on the single-domain RL(100\% D4RL $\rightarrow$ 10\% D4RL) and cross-domain offline RL. We average our results over 5 seeds and for each seed, we compute the normalized average score using 10 episodes. And we take the results (single-domain setting with 100\% D4RL) from their original papers. (ha: halfcheetah, ho: hopper, wa: walker2d, m: medium, mr: medium-replay, me: medium-expert.)}
\label{tab:simply combining}
\resizebox{\textwidth}{!}{%
\begin{tabular}{llccccccccccc}
\toprule
                                &                               & \multicolumn{5}{c}{single-domain setting (100\% D4RL → 10\% D4RL)}                                                                                                             &  & \multicolumn{5}{c}{cross-domain setting (10\% D4RL + source data)}                                                                                                             \\ \cmidrule{3-7} \cmidrule{9-13} 
\multicolumn{1}{c}{}            & \multicolumn{1}{c}{\textbf{}} & \multicolumn{1}{c}{\textbf{BCQ}} & \multicolumn{1}{c}{\textbf{MOPO}} & \multicolumn{1}{c}{\textbf{CQL}} & \multicolumn{1}{c}{\textbf{SPOT}} & \multicolumn{1}{c}{\textbf{IQL}} &  & \multicolumn{1}{c}{\textbf{BCQ}} & \multicolumn{1}{c}{\textbf{MOPO}} & \multicolumn{1}{c}{\textbf{CQL}} & \multicolumn{1}{c}{\textbf{SPOT}} & \multicolumn{1}{c}{\textbf{IQL}} \\ \midrule
\multirow{9}{*}{body mass}  & ha-m                          & $40.7 \rightarrow 37.6$          & $42.3 \rightarrow 3.2$            & $44.4 \rightarrow 35.4$          & $58.4 \rightarrow 45.4$           & $48.3 \rightarrow 46.8$          & \vline  & 35.1                             & 6.4                               & 32.2                             & 50.3                              & 36.7                             \\
                                & ha-mr                         & $38.2 \rightarrow 1.1$           & $53.1 \rightarrow -0.1$           & $46.2 \rightarrow 0.6$           & $52.2 \rightarrow 9.8$            & $44.5 \rightarrow 37.6$          & \vline  & 40.1                             & 10.2                              & 3.3                              & 37.6                              & 15.7                             \\
                                & ha-me                         & $64.7 \rightarrow 37.3$          & $63.5 \rightarrow 4.2$            & $62.4 \rightarrow-3.3$           & $86.9 \rightarrow 46.2$           & $94.7 \rightarrow 86.2$          &  \vline & 26.4                             & 8.9                               & 12.9                             & 33.8                              & 36.8                             \\
                                & ho-m                          & $54.5 \rightarrow 37.1$          & $28 \rightarrow 4.1$              & $58 \rightarrow 43$              & $86 \rightarrow 62.5$             & $67.5 \rightarrow 63.2$          &  \vline & 25.7                             & 5                                 & 44.9                             & 85.96                             & 21.1                             \\
                                & ho-mr                         & $33.1 \rightarrow 9.3$           & $67.5 \rightarrow 1$              & $48.6 \rightarrow 9.6$           & $100.2 \rightarrow 13.7$          & $97.4 \rightarrow 13.1$          &  \vline & 28.7                             & 5.5                               & 1.4                              & 15.5                              & 10.7                             \\
                                & ho-me                         & $110.9 \rightarrow 58$           & $23.7 \rightarrow 1.6$            & $98.7 \rightarrow 59.7$          & $99.3 \rightarrow 69$             & $107.4 \rightarrow 81.1$         &  \vline & 75.4                             & 4.8                               & 53.6                             & 75.5                              & 46.9                             \\
                                & wa-m                          & $53.1 \rightarrow 32.8$          & $17.8 \rightarrow 7$              & $79.2 \rightarrow 42.9$          & $86.4 \rightarrow 65.4$           & $80.9 \rightarrow 78.6$          &  \vline & 50.9                             & 5.7                               & 80                               & 22.5                              & 81                               \\
                                & wa-mr                         & $15 \rightarrow 6.9$             & $39 \rightarrow 5.1$              & $26.7 \rightarrow 4.6$           & $91.6 \rightarrow 18.6$           & $82.2 \rightarrow 12.3$          &  \vline & 14.9                             & 3.1                               & 0.8                              & 16                                & 18                               \\
                                & wa-me                         & $57.5 \rightarrow 32.5$          & $44.6 \rightarrow 5.3$            & $111 \rightarrow 49.5$           & $112 \rightarrow 84$              & $111.2 \rightarrow 111.7$        &  \vline & 55.2                             & 5.5                               & 63.5                             & 14.3                              & 84.3                             \\ \midrule
\multirow{9}{*}{joint noise} & ha-m                          & $40.7 \rightarrow 37.6$          & $42.3 \rightarrow 3.2$            & $44.4 \rightarrow 35.4$          & $58.4 \rightarrow 45.4$           & $48.3 \rightarrow 46.8$          & \vline  & 40                               & 3.5                               & 40.7                             & 50.1                              & 48.1                             \\
                                & ha-mr                         & $38.2 \rightarrow 1.1$           & $53.1 \rightarrow -0.1$           & $46.2 \rightarrow 0.6$           & $52.2 \rightarrow 9.8$            & $44.5 \rightarrow 37.6$          &  \vline & 39.4                             & 2.6                               & 2                                & 41                                & 25.6                             \\
                                & ha-me                         & $64.7 \rightarrow 37.3$          & $63.5 \rightarrow 4.2$            & $62.4 \rightarrow-3.3$           & $86.9 \rightarrow 46.2$           & $94.7 \rightarrow 86.2$          &  \vline & 55.3                             & 1.5                               & 7.7                              & 38.1                              & 51.3                             \\
                                & ho-m                          & $54.5 \rightarrow 37.1$          & $28 \rightarrow 4.1$              & $58 \rightarrow 43$              & $86 \rightarrow 62.5$             & $67.5 \rightarrow 63.2$          &  \vline & 49                               & 9.2                               & 58                               & 41.5                              & 49                               \\
                                & ho-mr                         & $33.1 \rightarrow 9.3$           & $67.5 \rightarrow 1$              & $48.6 \rightarrow 9.6$           & $100.2 \rightarrow 13.7$          & $97.4 \rightarrow 13.1$          &  \vline & 23.8                             & 2.3                               & 2.6                              & 23                                & 12                               \\
                                & ho-me                         & $110.9 \rightarrow 58$           & $23.7 \rightarrow 1.6$            & $98.7 \rightarrow 59.7$          & $99.3 \rightarrow 69$             & $107.4 \rightarrow 81.1$         &  \vline & 96                               & 6.1                               & 73.4                             & 52                                & 64.2                             \\
                                & wa-m                          & $53.1 \rightarrow 32.8$          & $17.8 \rightarrow 7$              & $79.2 \rightarrow 42.9$          & $86.4 \rightarrow 65.4$           & $80.9 \rightarrow 78.6$          & \vline & 44.9                             & 7.8                               & 73.2                             & 38.8                              & 62                               \\
                                & wa-mr                         & $15 \rightarrow 6.9$             & $39 \rightarrow 5.1$              & $26.7 \rightarrow 4.6$           & $91.6 \rightarrow 18.6$           & $82.2 \rightarrow 12.3$          & \vline & 9.8                              & 9.3                               & 1.4                              & 10.7                              & 2.1                              \\
                                & wa-me                         & $57.5 \rightarrow 32.5$          & $44.6 \rightarrow 5.3$            & $111 \rightarrow 49.5$           & $112 \rightarrow 84$              & $111.2 \rightarrow 111.7$        & \vline & 40.6                             & 15.2                              & 109.9                            & 74.3                              & 66.5                             \\ \midrule
Average $\spadesuit$ &                               & -48.3\%                          & -88.7\%                           & -61.5\%                          & -47.1\%                           & \textbf{-25.84}\%                         & \vline  &                                  &                                   &                                  &                                   &                                  \\
Average $\clubsuit$  &                               &                                  &                                   &                                  &                                   &                                  & \vline  & -50.1\%                          & -92.5\%                           & -59.4\%                          & -50.9\%                           & -48.4\%                          \\ \bottomrule
\end{tabular}%
}
\vspace{-1em}
\end{table*}

\subsection{Motivation Example}
\textbf{Question 1.} \textit{Is simply merging cross-domain data effective for cross-domain offline RL?}

To assess the efficacy of simply merging cross-domain data, we provide the results of different methods using single-domain offline data or cross-domain data. As shown in Table \ref{tab:simply combining}, we choose some typical model-based offline RL and model-free offline RL algorithms as backbones, including BCQ \cite{BCQ}, CQL \cite{cql}, MOPO \cite{mopo}, SPOT \cite{spot}, and IQL \cite{iql}. In the single-domain setting, the numbers to the left of the arrow ($\rightarrow$) represent the scores trained on 100\% D4RL data, and the numbers to the right of that represent the scores trained on only 10\% D4RL data. In the left panel (single-domain setting), Average$\spadesuit$ represents the average performance change when the offline data is reduced (100\% $\rightarrow$ 10\%). In the right panel (cross-domain setting), Average$\clubsuit$ represents the average performance difference between the cross-domain results and the best results among baselines that are trained with 100\% D4RL. In each line, we bold the best score among baselines that are trained with 10\% D4RL data, i.e., including the single-domain 10\% D4RL setting and the cross-domain setting. 

We observe that almost all offline RL methods experience a significant performance drop when the training data size is reduced from 100\% D4RL to 10\% D4RL. Additionally, incorporating additional source-domain data (i.e., simply merging cross-domain data) may lead to poor performance compared to using only target-domain data (10\% D4RL). We argue that the primary reason is that the source offline data cannot guarantee that the same transition (state-action-next-state) can be achieved in the target environment.
\begin{table*}[ht]
\centering
\caption{Results on body mass shift and joint noise shift tasks across five seeds, where w/o Aug means training IQL with mixed datasets.}
\label{tab:mass and joints}
\resizebox{\textwidth}{!}{%
\begin{tabular}{lllll|lllll}
\toprule
mass             & w/o Aug       & DARA                   & SRPO             & \textbf{Ours}          & joint           & w/o Aug      & DARA                   & SRPO                   & \textbf{Ours}          \\ \midrule
ha-m             & 36.69 ± 0.24  & 39.37 ± 0.11           &  47.02 ± 0.09    & \textbf{47.21 ± 0.19}           & ha-m             & 48.06 ± 0.09 & \textbf{52.59 ± 1.48}  & 51.98 ± 0.31           & 50.40 ± 0.36           \\
ha-mr            & 15.68 ± 4.28  & 35.90 ± 0.90           &  36.95 ± 0.84    & \textbf{38.76 ± 0.88}           & ha-mr            & 25.62 ± 1.09 & \textbf{47.64 ± 0.44}           & 38.48 ± 1.071          & 39.11 ± 0.55           \\
ha-me            & 36.78 ± 0.38  & 51.85 ± 2.02           &  \textbf{90.71 ± 1.05}    & 89.53 ± 2.72           & ha-me            & 51.26 ± 2.23 & 83.40 ± 0.56           & 82.96 ± 1.48           & \textbf{90.93 ± 3.21}  \\
ho-m             & 21.12 ± 10.55 & 55.92 ± 4.91           &  55.89 ± 5.85    & \textbf{63.78 ± 8.43}  & ho-m             & 49.03 ± 6.67 & \textbf{55.64 ± 3.71}           & 53.50 ± 6.61           & 54.04 ± 7.89  \\
ho-mr            & 10.69 ± 1.68  & 27.71 ± 4.62           &  17.92 ± 5.86   & \textbf{27.84 ± 9.36}  & ho-mr            & 12.01 ± 3.08 & 53.04 ± 16.10          & 37.68 ± 15.34          & \textbf{63.07 ± 27.96} \\
ho-me            & 46.95 ± 6.27  & 71.43 ± 7.70           &  96.03 ± 11.89   & \textbf{96.88 ± 18.25} & ho-me            & 64.19 ± 4.76 & 67.98 ± 31.39          & 72.59 ± 34.23          & \textbf{103.97 ± 7.68}  \\
wa-m             & 81.05 ± 0.70  & \textbf{86.77 ± 0.63}  &  83.49 ± 1.13  & 83.76 ± 0.14           & wa-m             & 62.03 ± 3.29 & 73.6 ± 6.41            & 77.32 ± 4.326          & \textbf{78.76 ± 2.74}  \\
wa-mr            & 17.99 ± 2.51  & \textbf{83.89 ± 0.61}  &  77.10 ± 2.73    & 79.19 ± 1.31           & wa-mr            & 2.10 ± 0.54  & 56.52 ± 6.64  & 45.19 ± 8.68           & \textbf{58.38 ± 10.55}            \\
wa-me            & 84.28 ± 2.77  & 93.74 ± 0.25           &  108.88 ± 5.83   & \textbf{112.10 ± 0.78} & wa-me            & 66.46 ± 7.75 & 119.25 ± 1.13          & \textbf{120.33 ± 3.88} & 116.19±5.76            \\ \midrule
\textbf{Sum}     & 351.21       & 546.56                &  613.98         & \textbf{639.05}        & \textbf{Sum}     & 380.75      & 609.66                & 580.03                 & \textbf{654.85}        \\
\textbf{Average} & -51.79\%      & -23.51\%               &  -15.37\%        & \textbf{-11.89\%}      & \textbf{Average} & -45.01\%     & -14.31\%               & -19.33\%               & \textbf{-10.81\%}      \\ \bottomrule
\end{tabular}%
}
\end{table*}

\subsection{Adaptation Performance in Cross-Domain RL}
\textbf{Question 2.} \textit{Can IGDF improve offline data efficiency and achieve better performance than prior methods?}

As shown in Table \ref{tab:simply combining}, across all offline RL approaches, we observe that IQL exhibits the least performance decline in the single-domain setting, with an average decrease of only 25\%, and it exhibits the highest data efficiency. For the sake of fairness, we select IQL as the common backbone for IGDF and other baselines. 

To systematically investigate the adaptation performance of IGDF, we design various dynamics shift scenarios, including kinematic shifts and morphology shifts. In our main experiment, we change the body mass of agents or introduce joint noise to the motion as our source domain environment. The empirical results are presented in Tables \ref{tab:mass and joints} and \ref{tab:additional results}. We observe that IGDF achieves the highest summation of scores over 18 tasks compared to the baselines when utilizing 10\% D4RL data. When compared to the best performance of the baselines with 100\% D4RL data, IGDF exhibits the smallest performance degradation (-11.89\% and -10.81\%) among the baselines with 10\% D4RL data. As shown in Eq. \eqref{equ:delta_MI}, benefiting from a substantial number of negative samples, IGDF can obtain a precise estimation of \emph{MI gap} and consequently make significant progress in discerning whether the sampled source-domain data helps the training over the target-domain data. However, other baseline approaches suffer from limited target data, which exacerbates the under-fitting issue of domain discriminators. Therefore, IGDF outperforms other baselines and obtains SOTA results on 11 out of 18 tasks.

\begin{table*}[ht]
\centering
\caption{Results in broken thighs and morphology tasks across five seeds, where w/o Aug means training IQL with mixed datasets.}
\label{tab:broken and morph}
\resizebox{\textwidth}{!}{%
\begin{tabular}{lllll|lllllll}
\toprule
broken       & w/o Aug              & DARA               & SRPO                 & Ours                   & morph   & w/o Aug              &DARA                & SRPO               & Ours                 \\ \midrule
ha-m         & 45.70 ± 0.10           & 43.77 ± 0.31       & 45.58 ± 0.59         & \textbf{46.48 ± 0.04}  & ha-m         & 44.38 ± 0.17           &37.49 ± 1.22        & 43.06 ± 1.44       & \textbf{45.80 ± 0.43}  \\
ha-mr        & 36.08 ± 1.23           &32.99 ± 3.93        & 37.48 ± 1.33         & \textbf{38.62 ± 0.68}  & ha-mr        & 31.96 ± 5.07           &1.73 ± 1.43         & 21.05 ± 11.92      & \textbf{37.56 ± 1.60}  \\
ha-me        & 81.67 ± 6.71           &68.79 ± 12.00       & 85.06 ± 5.23         & \textbf{89.76 ± 2.36}  & ha-me        & 60.24 ± 3.58           &35.57 ± 2.54        & 53.73 ± 8.57       & \textbf{80.19 ± 8.23}  \\
ho-m         & 53.63 ± 7.96           &55.28 ± 6.87        & 58.83 ± 8.07         & \textbf{63.15 ± 7.64}  & ho-m        & 55.26 ± 4.48           &54.12 ± 6.73        & 51.31 ± 7.20       & \textbf{59.72 ± 3.64}  \\
ho-mr        & 12.62 ± 1.29           &12.06 ± 0.77        & 12.02 ± 0.61       & \textbf{13.81 ± 2.30}  & ho-mr       & 16.83 ± 4.25           &13.86 ± 3.07        & 14.46 ± 2.18       & \textbf{16.92 ± 0.62}  \\
ho-me        & 60.25 ± 37.75          &56.44 ± 31.04       & 50.71 ± 28.51      & \textbf{95.95 ± 13.33}  & ho-me       & 92.13 ± 9.74           &63.55 ± 42.19       & 102.97 ± 6.60      & \textbf{106.80 ± 5.00}   \\
wa-m         & 63.47 ± 23.82          &22.71 ± 19.57       & 78.95 ± 3.10       & \textbf{79.19 ± 5.05}  & wa-m       & 34.29 ± 10.77          &11.29 ± 10.09       & 40.64 ± 13.54      & \textbf{59.39 ± 24.87} \\
wa-mr        & 9.03 ± 6.02            &14.20 ± 3.90        & 16.27 ± 3.63       & \textbf{17.00 ± 3.52}  & wa-mr      & 8.96 ± 6.68            &1.35 ± 3.37         & 8.15 ± 5.53        & \textbf{11.46 ± 4.58}  \\ 
wa-me        & 96.33 ± 9.77           &28.90 ± 12.79       & \textbf{103.71 ± 5.20}& 94.35 ± 10.59          & wa-me      & 87.60 ± 10.02          &13.64 ± 11.13       & 95.46 ± 26.70      & \textbf{107.87 ± 3.60}  \\ \midrule
\textbf{Sum} & 458.78               &335.14              & 488.61             & \textbf{538.31}        & \textbf{Sum} & 431.65               &232.6               & 430.83             & \textbf{525.71}         \\
\textbf{Average}    & -34.83\%  & -49.38\% & -30.47\%   & \textbf{-24.63\%} & \textbf{Average}    & -39.52\%  & -66.62\%  & -41.57\%  & \textbf{-27.38\%} \\ \bottomrule
\end{tabular}%
}
\end{table*}

\begin{table}[ht]
\centering
\caption{Comparison on body mass shift and joint noise shift tasks, where DARA* denotes the best score among the offline RL baselines (BCQ, CQL, IQL, and MOPO) when using dynamics-aware reward modification (DARA).}
\label{tab:additional results}
\resizebox{\columnwidth}{!}{%
\begin{tabular}{llll|llll}
\toprule
mass                          & $\text{DARA}^\ast$ & BOSA        & \textbf{Ours}   & joint                       & $\text{DARA}^\ast$              & BOSA        & \textbf{Ours} \\ \midrule
ha-m                          & 39.4               & 58.3 ± 2.5  & 47.21                            & ha-m                          & 52.6                            & 56.2 ± 0.27 & 50.40                          \\
ha-mr                         & 35.9               & 37.2 ± 0.7  & 38.76                            & ha-mr                         & 47.6                            & 51.3 ± 1.1  & 39.11                          \\
ha-me                         & 56.1               & 51.6 ± 0.1  & 89.53                            & ha-me                         & 83.4                            & 52.8 ± 0.45 & 90.93                          \\
ho-m                          & 59.3               & 82.4 ± 2.1  & 63.78                            & ho-m                          & 58.0                            & 78 ± 7.3    & 54.04                          \\
ho-mr                         & 34.1               & 39.7 ± 0.1  & 27.84                            & ho-mr                         & 53.0                            & 32.7 ± 1.3  & 63.07                          \\
ho-me                         & 99.7               & 104.2 ± 0.5 & 96.88                            & ho-me                         & 109.0                           & 96.4 ± 0.5  & 103.97                         \\
wa-m                          & 86.8               & 83 ± 2.9    & 83.76                            & wa-m                          & 81.2                            & 86.5±5.6    & 78.76                          \\
wa-mr                         & 83.9               & 21.4 ± 2    & 79.19                            & wa-mr                         & 56.5                            & 38.2 ± 4.7  & 58.38                          \\
wa-me                         & 93.3               & 86.5 ± 0.6  & 112.10                           & wa-me                         & 119.3                           & 85.8 ± 0.3  & 116.19                         \\ \midrule
\textbf{Sum} & 588.5              & 564.3       & \textbf{639.05} & \textbf{Sum} & \textbf{660.6} & 577.9       & 654.85                         \\ \bottomrule
\end{tabular}%
}
\vspace{-1em}
\end{table}
\textbf{Question 3.} \textit{Can IGDF sustain stable performance when confronted with a larger dynamics gap?}

To assess the performance of IGDF under substantial dynamics shift, we conduct tests on broken thighs and morphology tasks, following the settings outlined in \citet{vgdf}. As shown in Table \ref{tab:broken and morph}, DARA exhibits poor performance in this scenario when confronted with a larger dynamics gap. We attribute the failure of DARA to the potential unbounded issue of its estimation towards the dynamics gap based on likelihood probability. 
Similarly, SRPO can hardly show remarkable improvement on both two tasks compared to the results of training IQL with the mixed dataset. 
As the estimated dynamics gap in our IGDF (i.e., \emph{MI gap}) is bounded by Eq.\eqref{MI-bound}, it endows IGDF a consistently rational attitude to judge and utilize the whole source-domain dataset when encountered with a larger dynamics gap. As a result, IGDF can deliver a more robust performance and even achieve the SOTA results on 17 out of 18 tasks.



\subsection{Ablation Studies}
\paragraph{Data Ratio $\Gamma$.}
We employ various ratios of target domain data ($\Gamma=5\%, 10\%, 30\%, 50\%$) to investigate the algorithm's sensitivity to the amount of target domain data. The results shown in Figure \ref{fig:abl_data_ratio} demonstrate that increasing the amount of target data generally improves the performance of both IQL and IGDF, which means the data amount is a critical factor for offline performance. Consistently, IGDF achieves better performance than IQL and SRPO across a wide range of target data, showing that IGDF can fully exploit the reusable source domain transitions to enhance the training efficiency concerning the target domain. It is worth noting that with the increase in the amount of target data, the previous source data selection ratio may not be suitable for the current situation. So we can adjust the proportion of the data selection ratio appropriately to avoid significant sampling errors.
\begin{figure}[ht]
    \centering
    \includegraphics[width=0.9\columnwidth]{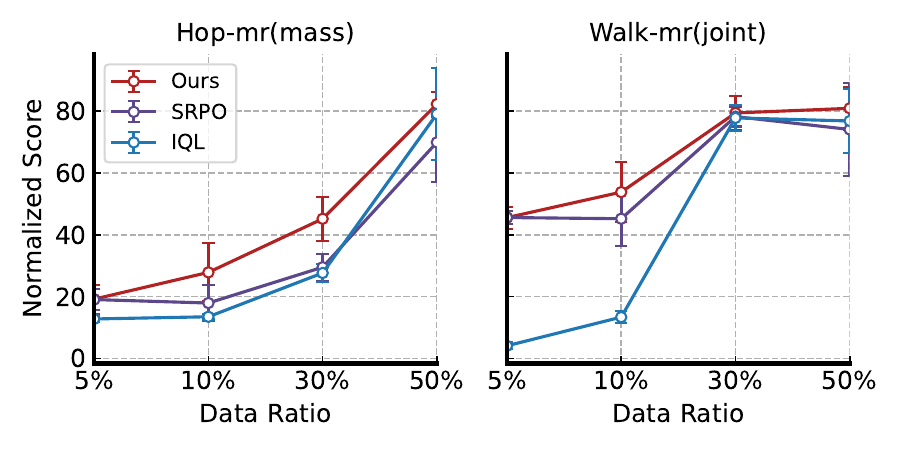}
    \vspace{-1em}
    \caption{Sensitivity on the amount of target-domain data.}
    \label{fig:abl_data_ratio}
    \vspace{-1em}
\end{figure}

\paragraph{Importance Coefficient $\alpha$}
We employ various importance coefficients ($\alpha = 0, 0.5, 1, 2$) to control the content of the importance weighting. Specifically, when alpha is equal to 0, we assign all filtered out source-domain data with the equal importance. The results shown in Figure \ref{fig:abl_impor_coef} demonstrate that incorporating importance weighting at a measured level can enhance the performance and smoothness of the algorithm. Nevertheless, striking a suitable balance presents a challenge. A too-small importance coefficient may lead to performance degradation, while an excessively large coefficient may result in unstable training.
\begin{figure}[t]
    \centering
    \includegraphics[width=\columnwidth]{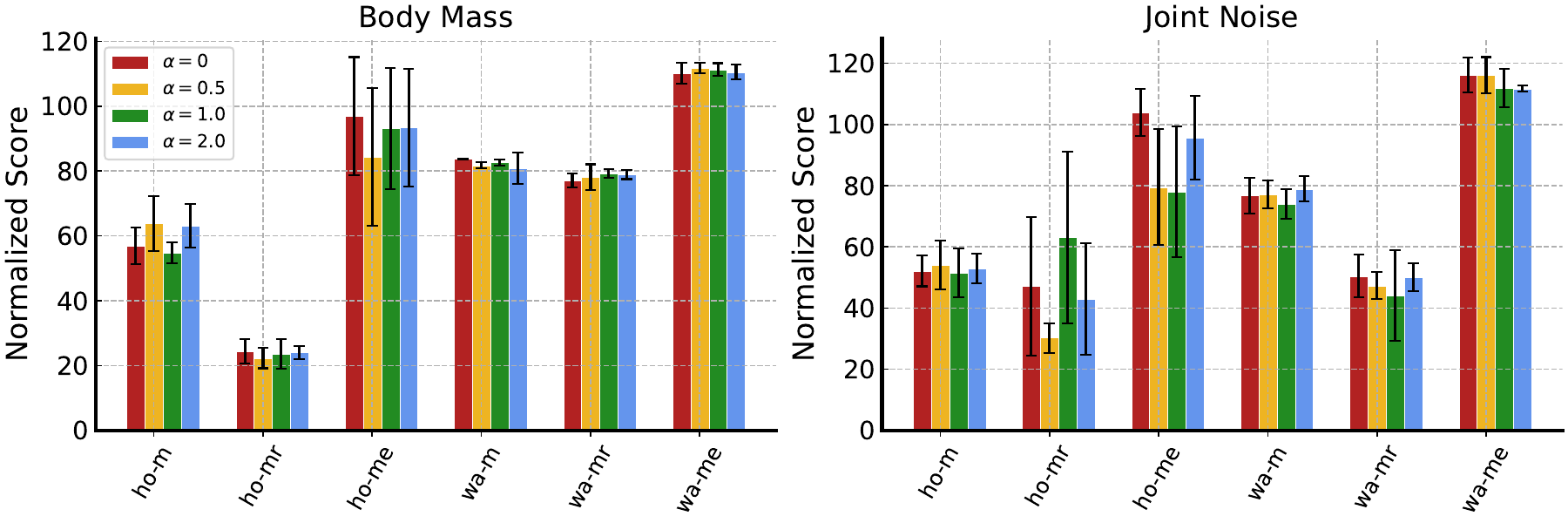}
    \vspace{-1em}
    \caption{Sensitivity on the importance coefficient.}
    \label{fig:abl_impor_coef}
\end{figure}

\section{Conclusion}
In this paper, we focus on the problem of leveraging source domain data with dynamics shifts for efficient RL training. Traditional methods often face performance degradation due to dynamics mismatch when merging cross-domain data. To address this issue, we propose a novel representation-based approach that measures the domain gap through a contrastive objective. The contrastive objective effectively captures the mutual-information gap of transition functions, providing a robust characterization of domain discrepancy without succumbing to unbounded issues. Practically, we present IGDF and the variant, serving as an add-on module for arbitrary offline RL algorithms. Empirical studies showcase the efficacy of our method outperforming previous methods, particularly in scenarios with significant dynamics gaps. One limitation of our approach is its exclusive emphasis on trajectories with similar transition probabilities while ignoring the quality of trajectories. Incorporating trajectory quality would add an interesting dimension for future exploration.


\section*{Acknowledgements}
This work is supported by the National Science Foundation for Distinguished Young Scholarship of China (No. 62025602) and the National Natural Science Foundation of China (No. 62306242).

\section*{Broader Impact}
This paper presents work whose goal is to advance the field of Machine Learning. There are many potential societal consequences of our work, none of which we feel must be specifically highlighted here.

\bibliography{igdf}
\bibliographystyle{icml2024}

\newpage
\appendix
\onecolumn
\section{Theoretical Proof}

\subsection{Proof of Theorem \ref{the:MI maximization}}
\begin{theorem}
The MI gap $\Delta I = I_{\rm tar}([S,A];S') - I_{\rm src}([S,A];S')$ can be lower bounded by the negative contrastive objective, as
\begin{equation}
\Delta I \geq \log(K-1) -\mathcal{L}_{\rm NCE} \coloneqq I_{\rm NCE},
\end{equation}
where $K-1$ is the number of negative samples from the source domain.
\end{theorem}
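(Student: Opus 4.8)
The plan is to adapt the classical InfoNCE lower-bound argument (as in \cite{infoNCE, vabound}) to the two-domain setting, where the positive is drawn from the target domain and the $K-1$ negatives from the source domain. First I would expand $\mathcal{L}_{\rm NCE}$ using the optimal score functions. The standard variational argument shows that $\mathcal{L}_{\rm NCE}$ is minimized when $h_2(s,a,s')$ is proportional to the target density ratio $\hP_{\rm tar}(s'|s,a)/\hrho_{\rm tar}(s')$ and $h_1(s,a,s')$ is proportional to the source density ratio $\hP_{\rm src}(s'|s,a)/\hrho_{\rm src}(s')$; substituting these optimal choices turns the ratio inside the logarithm into a form involving only these density ratios. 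Concretely, with $f_{\rm tar}(s,a,s') \coloneqq \hP_{\rm tar}(s'|s,a)/\hrho_{\rm tar}(s')$ and $f_{\rm src}$ analogously, the argument of the log becomes $f_{\rm tar}(s,a,s'_B) / \big(f_{\rm tar}(s,a,s'_B) + \sum_{s'_A \in S'^-} f_{\rm src}(s,a,s'_A)\big)$.

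Next I would take the negative logarithm and split it: $-\mathcal{L}_{\rm NCE} = \EE\big[\log f_{\rm tar}(s,a,s'_B)\big] - \EE\big[\log\big(f_{\rm tar}(s,a,s'_B) + \sum_{s'_A} f_{\rm src}(s,a,s'_A)\big)\big]$. The first term is exactly $I_{\rm tar}([S,A];S')$ by Eq.~\eqref{eq:mi-tar}. For the second term, I would lower-bound the sum inside the log by dropping the positive term $f_{\rm tar}(s,a,s'_B)$ — wait, that goes the wrong way, so instead I follow the usual InfoNCE trick: bound $f_{\rm tar}(s,a,s'_B) + \sum_{s'_A} f_{\rm src}(s,a,s'_A) \le \sum_{\text{all } K \text{ samples treated as source}} f_{\rm src}$ is not quite right either. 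The correct route is to note that the positive sample contributes at most a bounded amount and then use the concentration of the Monte Carlo average $\frac{1}{K-1}\sum_{s'_A} f_{\rm src}(s,a,s'_A)$ toward its expectation $\EE_{s'_A \sim \hrho_{\rm src}}[f_{\rm src}(s,a,s'_A)] = \EE_{s'_A}\big[\hP_{\rm src}(s'_A|s,a)/\hrho_{\rm src}(s'_A) \cdot \hrho_{\rm src}(s'_A)\big]$; by Jensen's inequality applied to $\log(\cdot)$ this expectation equals $1$ after integrating, giving $\EE[\log(\cdots)] \le \log\big((K-1) \cdot 1 + (\text{positive term contribution})\big)$, and handling the positive term via the same InfoNCE bookkeeping yields the $\log(K-1)$ constant and a residual $-I_{\rm src}$ term.

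Assembling the pieces: $-\mathcal{L}_{\rm NCE} \ge I_{\rm tar}([S,A];S') - I_{\rm src}([S,A];S') - \log(K-1) = \Delta I - \log(K-1)$, which rearranges to $\Delta I \ge \log(K-1) - \mathcal{L}_{\rm NCE} = I_{\rm NCE}$, as claimed. I would also remark that the bound is asymptotically tight as $K \to \infty$, since the Monte Carlo estimate of the source expectation concentrates and Jensen's gap vanishes, recovering $\lim_{K\to\infty} I_{\rm NCE} \to \Delta I$.

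The main obstacle I anticipate is making the bookkeeping for the positive term precise: in the single-density InfoNCE proof the positive sample appears in the denominator sum alongside the negatives and is handled symmetrically, but here the positive is scored by $h_2 \approx f_{\rm tar}$ while the negatives are scored by $h_1 \approx f_{\rm src}$, so the denominator is not a homogeneous sum. The careful step is to show that replacing $f_{\rm tar}(s,a,s'_B)$ in the denominator with its "source-equivalent" contribution introduces exactly the $-I_{\rm src}$ correction and no more, and to verify that the direction of every inequality (Jensen, the bound on the denominator) is consistent so that the final inequality points the right way. A secondary subtlety is confirming that the optimal-score substitution is legitimate — i.e.\ that $I_{\rm NCE}$ with the learned $h_1, h_2$ is no larger than $I_{\rm NCE}$ with the Bayes-optimal scores — which follows from the fact that $\mathcal{L}_{\rm NCE}$ is a cross-entropy-type objective minimized at the true density ratios.
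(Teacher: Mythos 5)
Your overall strategy---substitute the Bayes-optimal score functions, peel the positive term out of the denominator, apply Jensen, and identify the remaining expectations with $I_{\rm tar}$ and $I_{\rm src}$---is the same as the paper's, but the execution has the key inequalities pointing the wrong way, and the one step that actually produces the $-I_{\rm src}$ term is never exhibited. First, dropping the positive term from the denominator does \emph{not} "go the wrong way." With your decomposition $-\mathcal{L}_{\rm NCE}=\EE[\log f_{\rm tar}]-\EE\bigl[\log\bigl(f_{\rm tar}+\sum_{s'_A}f_{\rm src}\bigr)\bigr]$, the theorem requires an upper bound on $-\mathcal{L}_{\rm NCE}$ (equivalently a lower bound on $\mathcal{L}_{\rm NCE}$), hence a \emph{lower} bound on the denominator expectation, and $\log\bigl(f_{\rm tar}+\sum_{s'_A}f_{\rm src}\bigr)\geq\log\bigl(\sum_{s'_A}f_{\rm src}\bigr)$ is exactly that; it is the paper's $\log(1+x)\geq\log x$ step. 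Having talked yourself out of the correct move, you substitute the single-domain InfoNCE concentration argument, $\tfrac{1}{K-1}\sum_{s'_A}f_{\rm src}\to\EE_{s'_A\sim\hrho_{\rm src}}[f_{\rm src}]=1$. That route collapses the denominator to $\log(K-1)$ and yields $I_{\rm NCE}\lesssim I_{\rm tar}$, losing the $-I_{\rm src}$ correction entirely. The paper instead applies Jensen in the form $\log\bigl(\tfrac{1}{K-1}\sum_{s'_A}f_{\rm src}\bigr)\geq\tfrac{1}{K-1}\sum_{s'_A}\log f_{\rm src}$ and then identifies $\EE_{s'_A}[\log f_{\rm src}]$ with $I_{\rm src}$ (itself an approximation, which the paper marks with $\approx$). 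Your "residual $-I_{\rm src}$ term from the same InfoNCE bookkeeping" is precisely the step that must be made explicit, and it is not.

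Second, your final assembly contains a rearrangement error: from $-\mathcal{L}_{\rm NCE}\geq\Delta I-\log(K-1)$ one obtains $\log(K-1)-\mathcal{L}_{\rm NCE}\geq\Delta I$, i.e.\ $I_{\rm NCE}\geq\Delta I$, which is the \emph{reverse} of the claimed bound. What must be shown is $\mathcal{L}_{\rm NCE}\geq\log(K-1)-\Delta I$. The corrected chain, which is the paper's, reads
\begin{equation*}
\mathcal{L}_{\rm NCE}
=\EE\log\Bigl[1+\tfrac{\hrho_{\rm tar}(s'_B)}{\hP_{\rm tar}(s'_B|s,a)}\textstyle\sum_{s'_A}f_{\rm src}\Bigr]
\geq\EE\log\Bigl[(K-1)\tfrac{\hrho_{\rm tar}(s'_B)}{\hP_{\rm tar}(s'_B|s,a)}\cdot\tfrac{1}{K-1}\textstyle\sum_{s'_A}f_{\rm src}\Bigr]
\geq\log(K-1)-I_{\rm tar}+\EE_{s'_A}\bigl[\log f_{\rm src}\bigr]\approx\log(K-1)-\Delta I.
\end{equation*}
So the proposal identifies the right ingredients and the right high-level route, but as written it does not establish the stated inequality; fixing it amounts to restoring the dropped-positive-term step, applying Jensen in the direction above, and redoing the final rearrangement.
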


\begin{proof}
\label{app:theoretical2}
For the standard Info-NCE \cite{infoNCE}, we adopt the score function $h=\exp(\phi(s,a)^\top \psi(s'))$ to approximate the information density ratio of $p(s'|s,a)$ and $p(s')$, which preserves the MI between $(s,a)$ and $s'$. In Eq.~\eqref{equ:infonce}, we use $h_1$ and $h_2$ to represent the information density ratio in source and target domains, respectively. Then the contrastive objective becomes
\begin{align}
\mathcal{L}_{\text{NCE}} &= - \mathbb{E}_{p(s,a,s'_B)} \mathbb{E}_{S'^{-}} \log \left[ \frac{\frac{\hP_{\rm tar}(s'_B|s,a)}{\hrho_{\rm tar}(s'_B)}}{\frac{\hP_{\rm tar}(s'_B|s,a)}{\hrho_{\rm tar}(s'_B)}+\sum_{s'_A \in S'^{-}}\frac{\hP_{\rm src}(s'_A|s,a)}{\hrho_{\rm src}(s'_A)}} \right] \\
&= \mathbb{E}_{p(s,a,s'_B)} \mathbb{E}_{S'^{-}} \log \left[ 1 + \frac{\hrho_{\rm tar}(s'_B)}{\hP_{\rm tar}(s'_B|s,a)} \sum_{s'_A \in S'^{-}}\frac{\hP_{\rm src}(s'_A|s,a)}{\hrho_{\rm src}(s'_A)} \right] \\
&= \mathbb{E}_{p(s,a,s'_B)} \mathbb{E}_{S'^{-}} \log \left[ 1 + (K-1)\frac{\hrho_{\rm tar}(s'_B)}{\hP_{\rm tar}(s'_B|s,a)} \frac{1}{K-1} \sum_{s'_A \in S'^{-}}\frac{\hP_{\rm src}(s'_A|s,a)}{\hrho_{\rm src}(s'_A)} \right] \\
&\geq \mathbb{E}_{p(s,a,s'_B)} \log \left[ (K-1)\frac{\hrho_{\rm tar}(s'_B)}{\hP_{\rm tar}(s'_B|s,a)} \frac{1}{K-1} \sum_{s'_A \in S'^{-}}\frac{\hP_{\rm src}(s'_A|s,a)}{\hrho_{\rm src}(s'_A)} \right] \\
&= \mathbb{E}_{p(s,a,s'_B)} \log \left[ \frac{1}{K-1}\sum_{s'_A \in S'^{-}} (K-1)\frac{\hrho_{\rm tar}(s'_B)}{\hP_{\rm tar}(s'_B|s,a)}\frac{\hP_{\rm src}(s'_A|s,a)}{\hrho_{\rm src}(s'_A)} \right].
\label{equ: convex}
\end{align}
Considering $\log$ is a convex function, we can derive the following from Eq.~\eqref{equ: convex} according to Jensen's inequality, as
\begin{align}
\mathcal{L}_{\text{NCE}} \nonumber
&\geq \mathbb{E}_{p(s,a,s'_B)} \left[ \frac{1}{K-1} \sum_{s'_A \in S'^{-}} \log \left[ (K-1)\frac{\hrho_{\rm tar}(s'_B)}{\hP_{\rm tar}(s'_B|s,a)}\frac{\hP_{\rm src}(s'_A|s,a)}{\hrho_{\rm src}(s'_A)} \right] \right] \\
&= \mathbb{E}_{p(s,a,s'_B)} \left[ \frac{1}{K-1} \sum_{s'_A \in S'^{-}} \left[ \log(K-1) + \log\frac{\hrho_{\rm tar}(s'_B)}{p(s'_B|s,a)} + \log\frac{\hP_{\rm src}(s'_A|s,a)}{\hrho_{\rm src}(s'_A)} \right] \right] \\
&\approx \mathbb{E}_{p(s,a,s'_B)} \left[ \log(K-1) + \log\frac{\hrho_{\rm tar}(s'_B)}{\hP_{\rm tar}(s'_B|s,a)} + \mathbb{E}_{s'_A \in S'^{-}}\log\frac{\hP_{\rm src}(s'_A|s,a)}{\hrho_{\rm src}(s'_A)} \right] \\
&= -I_{\rm tar} + I_{\rm src} + \log(K-1) \\
&= -\Delta I + \log(K-1).
\end{align}
Then we have
\begin{align}
\Delta I \geq \log(K-1) - \mathcal{L}_{\text{NCE}}.
\end{align}
\end{proof}

\subsection{Proof of Theorem \ref{the:dg}}\label{app:thm-dg}

\begin{theorem}
For shared data from the source domain $\widehat{M}_{\rm src}$, i.e., $(s,a,s')\in \cD_{\rm src}$, the relationship between the MI gap and dynamics gap is
\begin{equation}
\Delta I = D_{\rm KL}[\hrho_{\rm src}(s')\|\hrho_{\rm tar}(s')]-D_{\rm KL}[\hP_{\rm src}(s'|s,a)\|\hP_{\rm tar}(s'|s,a)].
\end{equation}
In contrast, for data from the target domain $\widehat{M}_{\rm tar}$, the relationship between the MI gap and dynamics gap is
\begin{equation}
\Delta I = D_{\rm KL}[\hP_{\rm tar}(s'|s,a)\|\hP_{\rm src}(s'|s,a)] - D_{\rm KL}[\hrho_{\rm tar}(s')\|\hrho_{\rm src}(s')].
\end{equation}
Then, the MI gap is bounded by 
\begin{equation}
-H(\hrho_{\rm src}(s'))\leq \Delta I \leq H(\hrho_{\rm tar}(s')).
\end{equation}
\end{theorem}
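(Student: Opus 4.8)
The plan is to compute $\Delta I$ directly from the definitions and observe that the expectation is over a specific dataset, which determines which conditional and marginal appear inside the logarithm. Recall from Eq.~\eqref{eq:mi-tar} that $I_{\rm tar}([S,A];S') = \EE_{\cD}[\log \hP_{\rm tar}(s'|s,a)/\hrho_{\rm tar}(s')]$ and likewise $I_{\rm src}([S,A];S') = \EE_{\cD}[\log \hP_{\rm src}(s'|s,a)/\hrho_{\rm src}(s')]$, where $\cD$ is whichever dataset is actually used to draw the transitions. Subtracting, the $\log$ terms combine into
\begin{equation}
\Delta I = \EE_{(s,a,s')\sim\cD}\left[\log\frac{\hP_{\rm tar}(s'|s,a)}{\hP_{\rm src}(s'|s,a)} - \log\frac{\hrho_{\rm tar}(s')}{\hrho_{\rm src}(s')}\right].
\end{equation}
When the data is shared from $\widehat{M}_{\rm src}$, i.e.\ $\cD=\cD_{\rm src}$, the sampling distribution is $\hP_{\rm src}(s'|s,a)\hrho_{\rm src}(s)$ (or, integrating out $s$, the marginal is $\hrho_{\rm src}(s')$), so the first expectation is $-\EE_{\hrho_{\rm src}(s),\hP_{\rm src}}[\log(\hP_{\rm src}/\hP_{\rm tar})] = -\EE_{\hrho_{\rm src}(s)}[D_{\rm KL}(\hP_{\rm src}(\cdot|s,a)\|\hP_{\rm tar}(\cdot|s,a))]$ and the second is $-\EE_{\hrho_{\rm src}(s')}[\log(\hrho_{\rm tar}/\hrho_{\rm src})] = D_{\rm KL}(\hrho_{\rm src}(s')\|\hrho_{\rm tar}(s'))$; combining gives the first identity (with the conditional KL written as an expectation over the source state distribution, which I will state as the intended reading of $D_{\rm KL}[\hP_{\rm src}\|\hP_{\rm tar}]$). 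The target-domain case $\cD=\cD_{\rm tar}$ is the same computation with the roles of source and target swapped, yielding the second identity.

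For the two-sided bound, I would work from the source-domain identity. Since a KL divergence is nonnegative, $-D_{\rm KL}[\hP_{\rm src}\|\hP_{\rm tar}]\le 0$, and hence $\Delta I \le D_{\rm KL}[\hrho_{\rm src}(s')\|\hrho_{\rm tar}(s')]$; but that is not yet the claimed bound $H(\hrho_{\rm tar}(s'))$. Instead I would use the target-domain identity $\Delta I = D_{\rm KL}[\hP_{\rm tar}\|\hP_{\rm src}] - D_{\rm KL}[\hrho_{\rm tar}(s')\|\hrho_{\rm src}(s')]$ together with the decomposition
\begin{equation}
\Delta I = \EE_{\hrho_{\rm tar}(s')}[-\log\hrho_{\rm src}(s')] - H(\hrho_{\rm tar}(s')) - D_{\rm KL}[\hP_{\rm tar}\|\hP_{\rm src}]\big|_{\text{sign-corrected}},
\end{equation}
or more simply bound $\Delta I = \EE_{\cD_{\rm tar}}[\log \hP_{\rm tar}(s'|s,a)] - \EE_{\cD_{\rm tar}}[\log\hrho_{\rm tar}(s')] - \EE_{\cD_{\rm tar}}[\log \hP_{\rm src}(s'|s,a)] + \EE_{\cD_{\rm tar}}[\log\hrho_{\rm src}(s')]$, noting $-\EE_{\cD_{\rm tar}}[\log\hrho_{\rm tar}(s')] = H(\hrho_{\rm tar}(s'))$, that the two conditional-log terms combine into $-D_{\rm KL}[\hP_{\rm tar}\|\hP_{\rm src}]\le 0$, and that $\EE_{\hrho_{\rm tar}(s')}[\log\hrho_{\rm src}(s')]\le 0$ since $\hrho_{\rm src}(s')\le 1$. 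Collecting, $\Delta I \le H(\hrho_{\rm tar}(s'))$. The lower bound $\Delta I \ge -H(\hrho_{\rm src}(s'))$ is obtained symmetrically from the source-domain expansion: write $\Delta I = -D_{\rm KL}[\hP_{\rm src}\|\hP_{\rm tar}] - \EE_{\hrho_{\rm src}(s')}[\log\hrho_{\rm tar}(s')] + \EE_{\hrho_{\rm src}(s')}[\log\hrho_{\rm src}(s')]$; the first term is $\le 0$ but I want a lower bound, so instead I would start from $\Delta I \ge -I_{\rm src}([S,A];S')$ (using $I_{\rm tar}\ge 0$, which holds as it is a genuine mutual information) and then $I_{\rm src}([S,A];S') = H(\hrho_{\rm src}(s')) - H(\hrho_{\rm src}(s'|s,a)) \le H(\hrho_{\rm src}(s'))$ since conditional entropy is nonnegative, giving $\Delta I \ge -H(\hrho_{\rm src}(s'))$. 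This last line is in fact exactly the chain of inequalities already displayed in the main text just before Figure~\ref{fig:MI-gap}.

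The main obstacle I anticipate is purely notational bookkeeping rather than any genuine difficulty: one must be careful that the expectation defining each $I$ term is taken over the \emph{same} sampling distribution $\cD$ (the dataset actually used), so that the subtraction is legitimate and the resulting single expectation can be re-grouped into the conditional KL plus the marginal KL. A related subtlety is the precise meaning of $D_{\rm KL}[\hP_{\rm src}(s'|s,a)\|\hP_{\rm tar}(s'|s,a)]$ as a quantity depending on $(s,a)$: in the decomposition it really appears under an expectation $\EE_{(s,a)\sim\cD}$, i.e.\ it is the conditional KL divergence; I would note this convention explicitly so the identities read cleanly. Once that is pinned down, both identities follow from a one-line regrouping, and the entropy bounds follow from nonnegativity of (conditional) KL, nonnegativity of mutual information, and the elementary fact that a probability mass/density bounded by $1$ has nonpositive log — so $-\EE[\log\hrho(s')]\ge 0$ equals the entropy only when $\hrho$ is discrete, and for the continuous case I would instead phrase the bound with differential entropy or simply with $-\EE_{\hrho_{\rm tar}}[\log\hrho_{\rm tar}]$, matching the paper's offline empirical-MDP setting where the distributions are effectively over a finite dataset.
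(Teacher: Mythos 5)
Your derivation of the two identities is exactly the paper's: expand $\Delta I$ as a single expectation over whichever dataset $\cD$ is in force, and regroup the log-ratios into a conditional KL plus a marginal KL (with the conditional KL understood as averaged over $(s,a)\sim\cD$, a convention the paper also uses implicitly). That part, and the final lower-bound chain $\Delta I \geq -I_{\rm src} = -H(S')+H(S'|S,A)\geq -H(\hrho_{\rm src}(s'))$, match the paper's proof. However, your upper-bound argument contains a genuine sign error. In the expansion over $\cD_{\rm tar}$, the two conditional-log terms combine to
\begin{equation}
\EE_{\cD_{\rm tar}}\left[\log \hP_{\rm tar}(s'|s,a)-\log \hP_{\rm src}(s'|s,a)\right] \;=\; +\,D_{\rm KL}\left[\hP_{\rm tar}(s'|s,a)\,\|\,\hP_{\rm src}(s'|s,a)\right]\;\geq\;0,
\end{equation}
not $-D_{\rm KL}\leq 0$ as you claim. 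Collecting terms with the correct sign gives $\Delta I \leq H(\hrho_{\rm tar}(s')) + D_{\rm KL}[\hP_{\rm tar}\|\hP_{\rm src}] + \EE_{\hrho_{\rm tar}}[\log\hrho_{\rm src}(s')]$, where the last two terms have opposite signs and do not cancel, so the claimed bound does not follow from this route. The paper's argument mirrors the lower bound instead: $\Delta I \leq I_{\rm tar} = H(S') - H(S'|S,A) \leq H(\hrho_{\rm tar}(s'))$, which requires $I_{\rm src}\geq 0$ under the target-data expectation.

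A second, subtler problem is your justification of the lower bound's key step, namely that $I_{\rm tar}\geq 0$ "as it is a genuine mutual information." When the expectation is taken over $\cD_{\rm src}$, the quantity $I_{\rm tar}=\EE_{\cD_{\rm src}}[\log(\hP_{\rm tar}(s'|s,a)/\hrho_{\rm tar}(s'))]$ is a cross-domain log-ratio whose outer sampling distribution does not match the densities inside the logarithm; it is not a mutual information, and by your own first identity it equals $I_{\rm src}+D_{\rm KL}[\hrho_{\rm src}\|\hrho_{\rm tar}]-D_{\rm KL}[\hP_{\rm src}\|\hP_{\rm tar}]$, which can be negative when the dynamics gap is large. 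The paper does not prove this nonnegativity either; it posits the sign of $\Delta I$ (equivalently, which domain has the larger information density on the sampled data) as a modeling premise illustrated in Figure~\ref{fig:MI-gap}. To make both bounds go through you must invoke that premise (or state it explicitly as an assumption), not nonnegativity of MI. With that premise granted and the sign in the upper bound corrected, your proof coincides with the paper's.
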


\begin{proof}
For data shared from the source domain, we derive the MI gap as
\begin{equation}
\label{eq:deltaI-source-target}
\begin{aligned}
\Delta I &= \EE_{s,a,s'\sim \cD_{\rm src}}\left[\log \frac{\hP_{\rm tar}(s'|s,a)}{\hrho_{\rm tar}(s')}\right]-\EE_{s,a,s'\sim \cD_{\rm src}}\left[\log \frac{\hP_{\rm src}(s'|s,a)}{\hrho_{\rm src}(s')}\right]\\
&= \EE_{s,a,s'\sim \cD_{\rm src}}\left[- \log \frac{\hP_{\rm src}(s'|s,a)}{\hP_{\rm tar}(s'|s,a)} + \log \frac{\hrho_{\rm src}(s')}{\hrho_{\rm tar}(s')}\right]\\
&= -D_{\rm KL}\left[\hP_{\rm src}(s'|s,a) \big\| \hP_{\rm tar}(s'|s,a)\right] + D_{\rm KL}\left[\hrho_{\rm src}(s') \big\| \hrho_{\rm tar}(s')\right].
\end{aligned}
\end{equation}

Meanwhile, since data comes from the source domain, we have $\Delta I = I_{\rm tar}-I_{\rm src}\leq 0$ since the information density of the source domain $i(s,a,s')=\hP_{\rm src}(s'|s,a)/\hrho_{\rm src}$ is larger than that of the target domain, as we illustrated in Figure~\ref{fig:MI-gap}. Then we have 
\begin{equation}
\Delta I \geq - I_{\rm src}=-H(S')+H(S'|S,A)\geq -H(\hrho_{\rm src}(s')). 
\end{equation}

For the data comes from the target domain, we derive the MI gap as
\begin{equation}
\begin{aligned}
\Delta I &= \EE_{s,a,s'\sim \cD_{\rm tar}}\left[\log \frac{\hP_{\rm tar}(s'|s,a)}{\hrho_{\rm tar}(s')}\right]-\EE_{s,a,s'\sim \cD_{\rm tar}}\left[\log \frac{\hP_{\rm src}(s'|s,a)}{\hrho_{\rm src}(s')}\right]\\
&= \EE_{s,a,s'\sim \cD_{\rm tar}}\left[\log \frac{\hP_{\rm tar}(s'|s,a)}{\hP_{\rm src}(s'|s,a)} - \log \frac{\hrho_{\rm tar}(s')}{\hrho_{\rm src}(s')}\right]\\
&= D_{\rm KL}\left[\hP_{\rm tar}(s'|s,a) \big\| \hP_{\rm src}(s'|s,a)\right] + D_{\rm KL}\left[\hrho_{\rm tar}(s') \big\| \hrho_{\rm src}(s')\right].
\end{aligned}
\end{equation}

Similarly, since data comes from the target domain, we have $\Delta I = I_{\rm tar}-I_{\rm src}\geq 0$ since the information density of the target domain $i(s,a,s')=\hP_{\rm tar}(s'|s,a)/\hrho_{\rm tar}$ is larger than that of the source domain. Then we have 
\begin{equation}
\Delta I \leq I_{\rm tar}= H(S') - H(S'|S,A) \leq H(\hrho_{\rm src}(s')).
\end{equation}

Combing the bounds in the source domain and target domain, we have 
\begin{equation}
-H(\hrho_{\rm src}(s'))\leq \Delta I \leq H(\hrho_{\rm tar}(s')).
\end{equation}
As a result, the proposed MI gap is bounded by the entropy of state distribution. The MI gap overcomes the drawback of the dynamics gap since the dynamics gap can be unbounded with a large domain gap.

\end{proof}

\subsection{Proof of Theorem~\ref{thm:perf_bound}}
\label{app:thm3}

\begin{theorem}
\label{thm:perf_bound:app}
Under the setting of cross-domain offline RL, the performance difference of any policy $\pi$ evaluated by the source domain $\widehat{M}_{\rm src}$ and the true target MDP $M_{\rm tar}$ can be bounded as below,
\begin{equation}
\label{eq:perf-bound:app}
\eta_{{M}_{\rm tar}}(\pi) - \eta_{\widehat{M}_{\rm src}}(\pi) \geq - \dfrac{\gamma R_{\rm max}}{(1-\gamma)^2}\left\{2\EE_{\hrho_{\rm tar}}\left[D_{\rm TV}\left(P_{\rm tar}\|\hP_{\rm tar}\right)\right] + \sqrt{2D_{\rm KL}\left(\hrho_{\rm src}(s')\|\hrho_{\rm tar}(s')\right) + 2|\Delta I|}\right\}
\end{equation}
\end{theorem}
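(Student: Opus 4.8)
The plan is to decompose the performance difference $\eta_{M_{\rm tar}}(\pi) - \eta_{\widehat{M}_{\rm src}}(\pi)$ through the intermediate empirical MDP $\widehat{M}_{\rm tar}$, writing
\[
\eta_{M_{\rm tar}}(\pi) - \eta_{\widehat{M}_{\rm src}}(\pi) = \big(\eta_{M_{\rm tar}}(\pi) - \eta_{\widehat{M}_{\rm tar}}(\pi)\big) + \big(\eta_{\widehat{M}_{\rm tar}}(\pi) - \eta_{\widehat{M}_{\rm src}}(\pi)\big),
\]
and then lower-bounding each bracket separately. For the first bracket, $M_{\rm tar}$ and $\widehat{M}_{\rm tar}$ share everything except the transition kernel, so a standard simulation-lemma / telescoping argument gives a bound of the form $|\eta_{M_{\rm tar}}(\pi) - \eta_{\widehat{M}_{\rm tar}}(\pi)| \le \frac{\gamma R_{\rm max}}{(1-\gamma)^2}\cdot 2\,\EE_{\hrho_{\rm tar}}\big[D_{\rm TV}(P_{\rm tar}\|\hP_{\rm tar})\big]$, which matches the first term in \eqref{eq:perf-bound:app}. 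First I would establish this via the value-difference identity $\eta_{M_1}(\pi)-\eta_{M_2}(\pi) = \frac{\gamma}{1-\gamma}\EE_{s\sim\hrho^{\pi}_{M_1},a\sim\pi}\big[\EE_{s'\sim P_1}V^{\pi}_{M_2}(s') - \EE_{s'\sim P_2}V^{\pi}_{M_2}(s')\big]$, bounding $\|V^{\pi}_{M_2}\|_\infty \le R_{\rm max}/(1-\gamma)$ and the expectation difference by $2 D_{\rm TV}$.

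For the second bracket, both $\widehat{M}_{\rm tar}$ and $\widehat{M}_{\rm src}$ are empirical MDPs; again only their kernels differ, so the same value-difference identity gives
\[
\big|\eta_{\widehat{M}_{\rm tar}}(\pi) - \eta_{\widehat{M}_{\rm src}}(\pi)\big| \le \frac{\gamma R_{\rm max}}{(1-\gamma)^2}\cdot 2\,\EE\big[D_{\rm TV}(\hP_{\rm src}\|\hP_{\rm tar})\big],
\]
where the expectation is over states visited under $\widehat{M}_{\rm src}$, i.e. $\hrho_{\rm src}$, and actions under $\pi$. The key move is then to control this transition TV-distance using the quantities appearing in Theorem~\ref{the:dg}. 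By Pinsker's inequality, $2\,\EE_{\hrho_{\rm src}}[D_{\rm TV}(\hP_{\rm src}\|\hP_{\rm tar})] \le \sqrt{2\,\EE_{\hrho_{\rm src}}[D_{\rm KL}(\hP_{\rm src}(s'|s,a)\|\hP_{\rm tar}(s'|s,a))]}$ via Jensen (concavity of $\sqrt{\cdot}$). Now I invoke the source-domain decomposition from Theorem~\ref{the:dg}, namely $\EE_{\hrho_{\rm src}}[D_{\rm KL}(\hP_{\rm src}\|\hP_{\rm tar})] = D_{\rm KL}(\hrho_{\rm src}(s')\|\hrho_{\rm tar}(s')) - \Delta I$, so this KL term equals $D_{\rm KL}(\hrho_{\rm src}(s')\|\hrho_{\rm tar}(s')) - \Delta I \le D_{\rm KL}(\hrho_{\rm src}(s')\|\hrho_{\rm tar}(s')) + |\Delta I|$. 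Substituting gives the $\sqrt{2D_{\rm KL}(\hrho_{\rm src}\|\hrho_{\rm tar}) + 2|\Delta I|}$ term, and combining the two brackets (with the triangle inequality and keeping both with a minus sign) yields \eqref{eq:perf-bound:app}.

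The main obstacle I anticipate is making the bookkeeping on \emph{which occupancy measure} weights each divergence fully rigorous. The value-difference identity naturally produces an expectation over the \emph{discounted occupancy of $\pi$} in one of the two MDPs, whereas the theorem statement and Theorem~\ref{the:dg} are phrased in terms of the behavior-policy state distributions $\hrho_{\rm tar}$ and $\hrho_{\rm src}$; bridging these requires either the stated assumption that the datasets follow a common behavior policy (so that $\hrho$ plays the role of the relevant occupancy) or an additional argument absorbing the mismatch. A second, more minor, subtlety is that Theorem~\ref{the:dg}'s identity is an exact expectation over $\cD_{\rm src}\sim\hrho_{\rm src}$, which is exactly the weighting produced by the second-bracket bound, so these should line up — but I would double-check the direction of the KL ($\hP_{\rm src}\|\hP_{\rm tar}$ vs.\ the reverse) since Pinsker is symmetric in $D_{\rm TV}$ but the decomposition is not, and confirm that the $-\Delta I$ sign is handled correctly by passing to $|\Delta I|$.
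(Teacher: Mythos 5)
Your proposal follows essentially the same route as the paper's proof: the same decomposition through the intermediate empirical MDP $\widehat{M}_{\rm tar}$, the telescoping/simulation lemma for each bracket, Pinsker plus Jensen to convert the transition TV-distance into a KL term, and the source-domain MI-gap identity from Theorem~\ref{the:dg} to rewrite that KL as $D_{\rm KL}(\hrho_{\rm src}\|\hrho_{\rm tar}) - \Delta I$. The only cosmetic difference is that you pass to $|\Delta I|$ via the inequality $-\Delta I \le |\Delta I|$, whereas the paper notes that $\Delta I \le 0$ for source-shared data so the two are equal; your anticipated occupancy-measure bookkeeping is resolved exactly as you guessed, via the empirical-MDP/behavior-policy assumption.
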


\begin{proof}
    For the performance bound $\eta_{{M}_{\rm tar}}(\pi) - \eta_{\widehat{M}_{\rm src}}(\pi)$ for any policy $\pi$, we can firstly convert the bound to the following form:
    \begin{equation}
        \eta_{{M}_{\rm tar}}(\pi) - \eta_{\widehat{M}_{\rm src}}(\pi) = \underbrace{\eta_{{M}_{\rm tar}}(\pi) - \eta_{\widehat{M}_{\rm tar}}(\pi)}_{\rm (a)} + \underbrace{\eta_{\widehat{M}_{\rm tar}}(\pi) - \eta_{\widehat{M}_{\rm src}}(\pi)}_{\rm (b)}.\label{eq:perf-bound:app:eq1}
    \end{equation}
    For term $(a)$ in the RHS, we can obtain the performance bound based on the telescoping lemma~\cite{SLBO}:
    \begin{align}
        \eta_{{M}_{\rm tar}}(\pi) - \eta_{\widehat{M}_{\rm tar}}(\pi) &= \dfrac{\gamma}{1-\gamma}\EE_{\hrho_{\rm tar}}\left[\EE_{P_{M_{\rm tar}}}\left[V^{\pi}_{M_{\rm tar}}(s')\right] - \EE_{P_{\widehat{M}_{\rm tar}}}\left[V^{\pi}_{M_{\rm tar}}(s')\right]\right] \nonumber\\
        &= \dfrac{\gamma}{1-\gamma}\EE_{\hrho_{\rm tar}}\left[\sum_{s'}(P_{\rm tar}(s'|s,a) - \hP_{\rm tar}(s'|s,a))V^{\pi}_{M_{\rm tar}}(s')\right]\\
        &\geq - \dfrac{\gamma}{1-\gamma}\EE_{\hrho_{\rm tar}}\left[\sum_{s'}\left|P_{\rm tar}(s'|s,a) - \hP_{\rm tar}(s'|s,a)\right|\dfrac{R_{\rm max}}{1-\gamma}\right]\nonumber\\
        &= - \dfrac{2\gamma R_{\rm max}}{(1-\gamma)^2}\EE_{\hrho_{\rm tar}}\left[D_{\rm TV}\left(P_{\rm tar}(s'|s,a)\|\hP_{\rm tar}(s'|s,a)\right)\right]\label{eq:perf-bound:app:bound_a}
    \end{align}
    Following a similar procedure, we can obtain the performance bound of term $(b)$ in RHS of Eq.~\ref{eq:perf-bound:app:eq1}:
    \begin{align}
        \eta_{\widehat{M}_{\rm tar}}(\pi) - \eta_{\widehat{M}_{\rm src}}(\pi) &\geq -  \dfrac{2\gamma R_{\rm max}}{(1-\gamma)^2}\EE_{\hrho_{\rm src}}\left[D_{\rm TV}\left(\hP_{\rm tar}(s'|s,a)\|\hP_{\rm src}(s'|s,a)\right)\right]\nonumber\\
        &\geq - \dfrac{2\gamma R_{\rm max}}{(1-\gamma)^2}\EE_{\hrho_{\rm src}}\left[\sqrt{\dfrac{1}{2}D_{\rm KL}\left(\hP_{\rm src}(s'|s,a)\|\hP_{\rm tar}(s'|s,a)\right)}\right],\label{eq:perf-bound:app:eq2}
    \end{align}
    where the second inequality derives from Pinsker's inequality.
    Recalling the MI gap $\Delta I$ that we aim to bound for the source domain offline dataset $D_{\rm src}$, we can build the connection between the MI gap and dynamics gap as follows:
    \begin{align*}
        \Delta I &= I_{\rm tar}\left[[S,A];S'\right] - I_{\rm src}\left[[S,A];S'\right] \\
        &= \EE_{D_{\rm src}}\left[\log \hP_{\rm tar}(s'|s,a) - \log \hrho_{\rm tar}(s')\right] - \EE_{D_{\rm src}}\left[\log \hP_{\rm src}(s'|s,a) - \log \hrho_{\rm src}(s')\right]\\
        &= \EE_{(s,a)\sim D_{\rm src}}\left[\EE_{s'\sim \hP_{\rm src}(s'|s,a)}\left[\log\dfrac{\hP_{\rm tar}(s'|s,a)}{\hP_{\rm src}(s'|s,a)}\right]\right] + \EE_{s'\sim D_{\rm src}}\left[\log\dfrac{\hrho_{\rm src}(s')}{\hrho_{\rm tar}(s')}\right]\\
        &= - \EE_{(s,a)\sim D_{\rm src}}\left[D_{\rm KL}\left(\hP_{\rm src}(s'|s,a)\|\hP_{\rm tar}(s'|s,a)\right)\right] + D_{\rm KL}\left(\hrho_{\rm src}\|\hrho_{\rm tar}\right).
    \end{align*}
    Thus, we can formulate the dynamics gap considering the empirical MDPs as:
    \begin{align*}
        \EE_{(s,a)\sim D_{\rm src}}\left[D_{\rm KL}\left(\hP_{\rm src}(s'|s,a)\|\hP_{\rm tar}(s'|s,a)\right)\right] &=  D_{\rm KL}\left(\hrho_{\rm src}\|\hrho_{\rm tar}\right) - \Delta I
    \end{align*}
    Since the empirical MDP characterizes the distribution of the offline dataset~(i.e.,~$\hat{P}(s'|s,a) = 0, \forall~(s,a,s')\notin\mathcal{D}$), the state-action distribution conditioned on any policy $\pi$ equals to that conditioned on the behavior policy $\pi^{b}$~(i.e.,~$\hrho^\pi(s,a)=\hrho^{\pi^{b}}(s,a),\forall s,a$). Thus, we can further derive Eq.~\eqref{eq:perf-bound:app:eq2} to:
    \begin{align}
        \eta_{\widehat{M}_{\rm tar}}(\pi) - \eta_{\widehat{M}_{\rm src}}(\pi) &\geq - \dfrac{2\gamma R_{\rm max}}{(1-\gamma)^2}\EE_{(s,a)\sim D_{\rm src}}\left[\sqrt{\dfrac{1}{2}D_{\rm KL}\left(\hP_{\rm src}(s'|s,a)\|\hP_{\rm tar}(s'|s,a)\right)}\right]\nonumber\\
        &\geq - \dfrac{2\gamma R_{\rm max}}{(1-\gamma)^2} \sqrt{\dfrac{1}{2}\EE_{(s,a)\sim D_{\rm src}}\left[D_{\rm KL}\left(\hP_{\rm src}(s'|s,a)\|\hP_{\rm tar}(s'|s,a)\right)\right]} \quad \text{(Jensen's inequality)}\nonumber\\
        &= - \dfrac{\sqrt{2}\gamma R_{\rm max}}{(1-\gamma)^2} \sqrt{D_{\rm KL}\left(\hrho_{\rm src}\|\hrho_{\rm tar}\right) - \Delta I}\label{eq:perf-bound:app:bound_b}
    \end{align}
    Integrating Eq.~\eqref{eq:perf-bound:app:bound_a} and Eq.~\eqref{eq:perf-bound:app:bound_b}, we can obtain the final performance bound:
    \begin{align}
        \eta_{{M}_{\rm tar}}(\pi) - \eta_{\widehat{M}_{\rm src}}(\pi) &\geq - \dfrac{2\gamma R_{\rm max}}{(1-\gamma)^2}\EE_{\hrho_{\rm tar}}\left[D_{\rm TV}\left(P_{\rm tar}(s'|s,a)\|\hP_{\rm tar}(s'|s,a)\right)\right] - \dfrac{\sqrt{2}\gamma R_{\rm max}}{(1-\gamma)^2} \sqrt{D_{\rm KL}\left(\hrho_{\rm src}\|\hrho_{\rm tar}\right) - \Delta I}\nonumber\\
        &= - \dfrac{\gamma R_{\rm max}}{(1-\gamma)^2}\left\{2\EE_{\hrho_{\rm tar}}\left[D_{\rm TV}\left(P_{\rm tar}\|\hP_{\rm tar}\right)\right] + \sqrt{2D_{\rm KL}\left(\hrho_{\rm src}(s')\|\hrho_{\rm tar}(s')\right) - 2\Delta I}\right\}
    \end{align}

Meanwhile, according to Eq.~\eqref{eq:deltaI-source-target}, since $\Delta I\leq 0$ when we share the data from the source domain to the target domain, we can rewrite the performance bound as 
\begin{equation}
\eta_{{M}_{\rm tar}}(\pi) - \eta_{\widehat{M}_{\rm src}}(\pi) \geq - \dfrac{\gamma R_{\rm max}}{(1-\gamma)^2}\left\{2\EE_{\hrho_{\rm tar}}\left[D_{\rm TV}\left(P_{\rm tar}\|\hP_{\rm tar}\right)\right] + \sqrt{2D_{\rm KL}\left(\hrho_{\rm src}(s')\|\hrho_{\rm tar}(s')\right) + 2|\Delta I|}\right\}
\end{equation}
\end{proof}

\subsection{Sub-optimality gap of IGDF}
\label{app:sub-optimaliy gap}

According to Theorem \ref{thm:perf_bound}, we have:
\begin{equation}
    \eta_{M_{\text{tar}}}(\pi) - \eta_{\widehat{M}_{\text{src}}}(\pi) \geq - \dfrac{\gamma R_{\rm max}}{(1-\gamma)^2} {2E_{\hat{\rho}_{\text{tar}}} \left[ D_{\rm TV} \left( P_{\rm tar} || \hat{P}_{\rm tar}\right)\right]+ \sqrt{2D_{\rm KL}\left(\hat{\rho}_{\rm src}(s') || \hat{\rho}_{\rm tar}(s')\right) + 2|\Delta I|} },
\end{equation}
where $2D_{\rm TV}\left(P_{\rm tar}||\hat{P}_{\rm tar}\right) = \sum_{s'} | \hat{P}_{\rm tar}(s' | s, a) - P_{\rm tar}(s' | s, a)| = || \hat{P}_{\rm tar}(s, a) - P_{\rm tar}(s, a)||_1.$

by Hoeffding’s inequality and union bound, the following inequalities hold with probability at least $1 - \delta$: 
\begin{equation}
    \max_{s, a}||\hat{P}_{\rm tar}(s, a) - P_{\rm tar}(s, a)||_1 \leq \max_{s, a} |\mathcal{S}| \cdot ||\hat{P}_{\rm tar}(s, a) - P_{\rm tar}(s, a)||_{\infty} \leq |\mathcal{S}| \sqrt{\frac{1}{2n} \ln{\frac{4|\mathcal{S} \times \mathcal{A} \times \mathcal{S}|}{\delta}}},
\end{equation}
where n is number of samples for each state action pair, $\hat{P}_{\rm tar}$ as a matrix of size $|\mathcal{S}||\mathcal{A}| \times |\mathcal{S}|$. 

Moreover, we can obtain a tighter analysis by proving an $l_1$ concentration bound for multinomial distribution directly: $\max_{s, a}||\hat{P}_{\rm tar}(s, a) - P_{\rm tar}(s, a)||_1 \leq \sqrt{\frac{1}{2n} \ln{\frac{2|\mathcal{S} \times \mathcal{A}| \cdot 2^{|\mathcal{S}|}}{\delta}}}$ (Refer to \cite{agarwal2019reinforcement} for more details). 

So we obtain the following conclusion: 
\begin{equation}
    \eta_{M_{\text{tar}}}(\pi) - \eta_{\widehat{M}_{\text{src}}}(\pi) \geq - \dfrac{\gamma R_{\rm max}}{(1-\gamma)^2} { \underbrace{E_{\hat{\rho}_{\text{tar}}} \left[ \sqrt{\frac{1}{2n} \ln{\frac{2|\mathcal{S} \times \mathcal{A}| \cdot 2^{|\mathcal{S}|}}{\delta}}} \right]}_{(a)} + \sqrt{\underbrace{2 D_{\rm KL}\left(\hat{\rho}_{\rm src}(s')||\hat{\rho}_{\rm tar}(s')\right)}_{(b)} + \underbrace{2|\Delta I|}_{(c)}}}.
\end{equation}

For term (a), sampling more target-domain data allows us to obtain a more accurate estimate of $\hat{P}_{\text{tar}}$, thereby reducing the discrepancy between the true $P_{\text{tar}}$ and the estimated $\hat{P}_{\text{tar}}$. For term (b), it is determined by the properties of the source and target domain datasets and cannot be optimized. Our focus lies on term (c), where by effectively selecting samples with smaller dynamics gaps, we can minimize $\Delta I$ and tighten the performance bound.

\section{Algorithm Description}
\label{app-alg}

The pseudocode of IGDF+IQL is presented in Algorithm \ref{alg:IGDF+IQL}. We utilize IQL \cite{iql} as our backbone.
\begin{algorithm}[ht]
\caption{Info-Gap Data Filtering algorithm based on IQL}
\label{alg:IGDF+IQL}
\textbf{Input:} Source domain offline dataset $\mathcal{D}_{\text{\rm src}}$, target domain offline dataset $\mathcal{D}_{\text{\rm tar}}$, mixed offline dataset $\mathcal{D}_{\text{mix}}$ \\
\textbf{Initialization:} Policy network $\pi_{\eta}$, value network $V_{\beta}$, $Q_{\theta}$, target $Q$ network $Q_{\hat{\theta}}$, encoder networks $\phi(s,a)$, $\psi(s')$, data selection ratio $\xi$, batch size $B$, importance coefficient $\alpha$
\begin{algorithmic}[1]
\STATE \textit{// Contrastive Representation Learning}
\STATE Maximize the mutual information by training encoder networks $\phi(s,a)$, $\psi(s')$ via Eq. \eqref{equ:infonce-simple}
\STATE \textit{// TD Learning}
\FOR{each gradient step}
    \STATE Sample $b_{\text{\rm src}}:=\{\left( s, a, r, s' \right)\}^{\frac{B}{2\xi}}_{\text{\rm src}}$ from $\mathcal{D}_{\text{\rm src}}$
    \STATE Sample $b_{\text{\rm tar}}:=\{\left( s, a, r, s' \right)\}^{\frac{B}{2}}_{\text{\rm tar}}$ from $\mathcal{D}_{\text{\rm tar}}$
    \STATE Sample the top-$\xi$ samples from $b_{\rm src}$ ranked by $h(s_{\text{\rm src}}, a_{\text{\rm src}}, s'_{\text{\rm src}}) := \exp (\phi(s_{\text{\rm src}}, a_{\text{\rm src}})^T\psi(s'_{\text{\rm src}}))$ following:
    \begin{align*}
        \omega(s, a, s') := \mathbbm{1}\left( h(s, a, s') > h_{\xi \%} \right)
    \end{align*}
    \STATE Optimize the $V_{\beta}$ function following loss:
    \begin{align*}
        L_V(\beta) = \mathbb{E}_{(s,a)\sim \mathcal{D}_{\text{mix}}} \left[ L_2^{\tau} (Q_{\hat{\theta}}(s,a) - V_{\beta}(s)) \right]
    \end{align*}
    \STATE Optimize the $Q_{\theta}$ function following loss:
    \begin{align*}
        L_Q(\theta) &= \frac{1}{2} \mathbb{E}_{(s,a,s')\sim \mathcal{D}_{\text{\rm tar}}} \left[ (r(s,a) + \gamma V_{\beta}(s')) - Q_{\theta}(s,a)\right]^2 \\
        &\quad + \frac{1}{2} \alpha \cdot h(s, a, s') \mathbb{E}_{\left(s, a, s^{\prime}\right)\sim \mathcal{D}_{\text{\rm src}}}\left[ \omega(s, a, s') \left( (r(s,a) + \gamma V_{\beta}(s')) - Q_{\theta}(s,a) \right)^2\right]
    \end{align*}
    \STATE Update the target $Q$ function: 
    \begin{align*}
        \hat{\theta} \leftarrow (1-\mu)\hat{\theta} + \mu\theta 
    \end{align*}
\ENDFOR
\STATE \textit{// Policy Extractions (AWR)}
\FOR{each gradient step}
    \STATE Optimize the policy network $\pi_{\eta}$ following loss:
    \begin{align*}
        L_{\pi}(\eta) = \mathbb{E}_{(s,a)\sim \mathcal{D}_{\text{mix}}} \left[ \exp{\lambda(Q_{\hat{\theta}} - V_{\beta}(s)) \log \pi_{\eta}(a|s)} \right]
    \end{align*}
\ENDFOR
\end{algorithmic}
\end{algorithm}

\section{Detailed Experiment Setting}
\label{app:datailed experiment setting}
\subsection{Datasets}
To generate environments with different transition functions, we design varying dynamics shift tasks based on three Mujoco benchmarks from Gym (HalfCheetah-v2, Hopper-v2, Walker2D-v2). These tasks encompass a range of modifications, such as adjusting the body mass (body mass shift), adding noises to joint (joint noise shift) of the agents, training with broken thighs and integrating morphological differences (refer to Table \ref{tab:dynamics shift setting} and Figure \ref{fig:dynmic shift tasks} for the details). For each benchmark, we categorize these tasks into two variants: \textbf{kinematic shift tasks} and \textbf{morphology shift tasks}.

\begin{figure}[ht]
    \centering
    \includegraphics[width=0.9\textwidth]{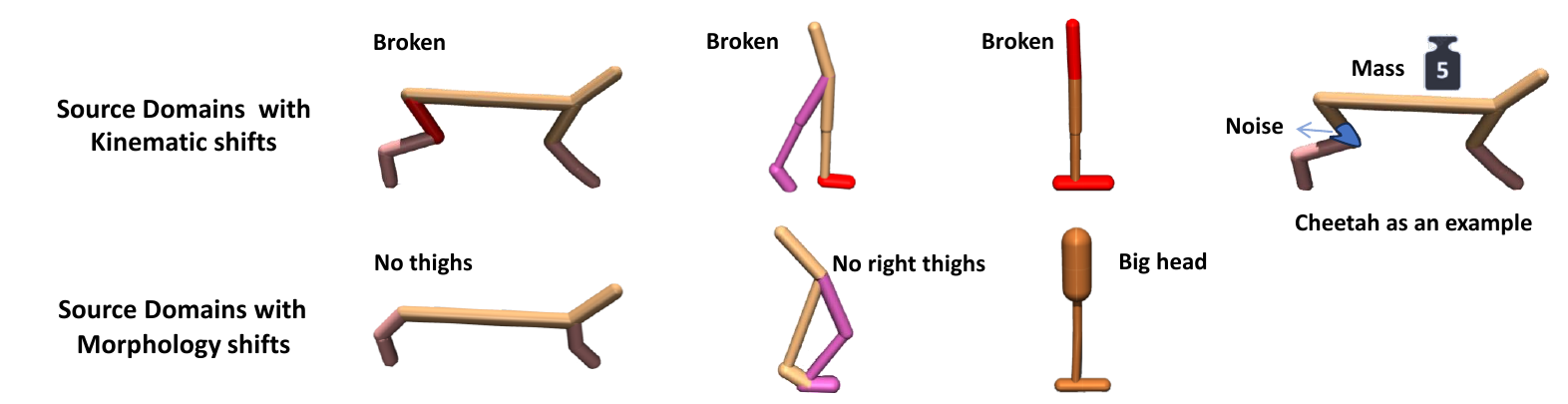}
    \caption{Illustration of all dynamics shift tasks, including kinematic shifts tasks $(\textit{Top})$ and morphology shifts tasks $(\textit{Bottom})$. For body mass shift and joint noise shift, we take halfcheetah as an example.}
    \label{fig:dynmic shift tasks}
    \vspace{-1em}
\end{figure}

As shown in Table \ref{tab:datasets statistics}, in the \textit{HalfCheetah}, \textit{Hopper}, and \textit{Walker2d} dynamics adaptation setting, we set D4RL datasets as our target domain. For the source domain, we change the environment parameters and then collect the source offline datasets in the changed environments. For body mass shift and joint noise shift, we follow the same setting of DARA, wherein 1) "Medium" offline data, generated by a trained policy with the “medium” level of performance in the source environment, 2) "Medium-Replay" offline data, consisting of recording all samples in the replay buffer observed during training until the policy reaches the “medium” level of performance, 3) "Medium-Expert" offline data, mixing equal amounts of expert demonstrations and ”medium” data in the source environment. For broken thighs and morphology shift, we alter the XML file of the Mujoco simulator following VGDF \cite{vgdf}, and then collect 1M replay transitions with SAC \cite{sac} in every benchmark. 

\begin{table}[ht]
\centering
\caption{Statistics for each task in our cross-domain offline setting.}
\label{tab:datasets statistics}
\resizebox{0.75\textwidth}{!}{%
\begin{tabular}{ccccc}
\toprule
\textbf{Environment}         & \textbf{Dynamics Shift}                                           & \textbf{Task Name} & \textbf{Target Dataset}      & \textbf{Source Dataset}        \\ \midrule
\multirow{6}{*}{HalfCheetah} & \multirow{3}{*}{Body Mass / Joint Noise}                 & Medium             & $10^5$ (D4RL) & $10^6$          \\
                             &                                                                   & Medium-Replay      & 10100 (D4RL)                 & $10^6$          \\
                             &                                                                   & Medium-Expert      & $2 \times 10^5$ (D4RL)        & $2 \times 10^6$                 \\ \cline{2-5} 
                             & \multicolumn{1}{l}{\multirow{3}{*}{Broken / Morphology}} & Medium             & $10^5$ (D4RL) & $10^6$ (Replay) \\
                             & \multicolumn{1}{l}{}                                              & Medium-Replay      & 10100 (D4RL)                 & $10^6$ (Replay) \\
                             & \multicolumn{1}{l}{}                                              & Medium-Expert      & $2 \times 10^5$ (D4RL)        & $10^6$ (Replay) \\ \midrule
\multirow{6}{*}{Hopper}      & \multirow{3}{*}{Body Mass / Joint Noise}                & Medium             & $10^5$ (D4RL) & $10^6$          \\
                             &                                                                   & Medium-Replay      & 20092 (D4RL)                 & $10^6$          \\
                             &                                                                   & Medium-Expert      & $2 \times 10^5$ (D4RL)        & $2 \times 10^6$                 \\ \cline{2-5} 
                             & \multicolumn{1}{l}{\multirow{3}{*}{Broken / Morphology}} & Medium             & $10^5$ (D4RL) & $10^6$ (Replay) \\
                             & \multicolumn{1}{l}{}                                              & Medium-Replay      & 20092 (D4RL)                 & $10^6$ (Replay) \\
                             & \multicolumn{1}{l}{}                                              & Medium-Expert      & $2 \times 10^5$ (D4RL)        & $10^6$ (Replay) \\ \midrule
\multirow{6}{*}{Walker2d}    & \multirow{3}{*}{Body Mass / Joint Noise}                & Medium             & $10^5$ (D4RL) & $10^6$          \\
                             &                                                                   & Medium-Replay      & 10093 (D4RL)                 & $10^6$          \\
                             &                                                                   & Medium-Expert      & $2 \times 10^5$ (D4RL)        & $2 \times 10^6$                 \\ \cline{2-5} 
                             & \multirow{3}{*}{Broken / Morphology}                     & Medium             & $10^5$ (D4RL) & $10^6$ (Replay) \\
                             &                                                                   & Medium-Replay      & 10093 (D4RL)                 & $10^6$ (Replay) \\
                             &                                                                   & Medium-Expert      & $2 \times 10^5$ (D4RL)        & $10^6$ (Replay) \\ \bottomrule
\end{tabular}%
}
\end{table}

\subsection{Kinematic Shift Tasks}
Detailed modifications of the environments with kinematic shifts are shown below (for changing body mass and adding joint noise, see Table \ref{tab:dynamics shift setting} for the details):
\begin{table}[!t]
\vspace{-1em}
\caption{Dynamics shift for Halfcheetah, Hopper, Walker2d tasks. For the body mass shift, we change the mass of the body in the source MDP $\mathcal{M}_{\rm src}$. For the joint noise shift, we add a noise (randomly sampling in [-0.05, +0.05]) to the actions when we collect the source offline data.}
\label{tab:dynamics shift setting}
\resizebox{\textwidth}{!}{%
\begin{tabular}{lllllllll}
\toprule
       & \multicolumn{2}{c}{Halfcheetah}        &  & \multicolumn{2}{c}{Hopper}              &  & \multicolumn{2}{c}{Walker}               \\ \cmidrule{2-3} \cmidrule{5-6} \cmidrule{8-9} 
       & Body Mass shift & Joint noise shift    &  & Body Mass shift  & Joint noise shift    &  & Body Mass shift   & Joint noise shift    \\
\textbf{Source} & mass{[}4{]}=0.5 & action{[}-1{]}+noise &  & mass{[}-1{]}=2.5 & action{[}-1{]}+noise &  & mass{[}-1{]}=1.47 & action{[}-1{]}+noise \\
\textbf{Target} & mass{[}4{]}=1.0 & action{[}-1{]}+0     &  & mass{[}-1{]}=5.0 & action{[}-1{]}+0     &  & mass{[}-1{]}=2.94 & action{[}-1{]}+0     \\ \bottomrule
\end{tabular}%
}
\vspace{-1em}
\end{table}

\textbf{HalfCheetah - broken back thigh:} We modify the rotation range of the joint on the thigh of the back leg from $\left[- 0.52, 1.05\right]$ to $\left[- 0.0052, 0.0105\right]$.
\begin{lstlisting}[language=XML]
<joint axis="0 1 0" damping="6" name="bthigh" pos="0 0 0" range="-.0052 .0105" stiffness="240" type="hinge"/>
\end{lstlisting}

\textbf{Hopper - broken joint:} We modify the rotation range of the joint on the head from $\left[-150, 0\right]$ to $\left[-0.15, 0\right]$ and the joint on foot from $\left[-45, 45\right]$ to $\left[-18, 18\right]$.
\begin{lstlisting}[language=XML]
<joint axis="0 -1 0" name="thigh_joint" pos="0 0 1.05" range="-0.15 0" type="hinge"/>
\end{lstlisting}
\begin{lstlisting}[language=XML]
<joint axis="0 -1 0" name="foot_joint" pos="0 0 0.1" range="-18 18" type="hinge"/>
\end{lstlisting}

\textbf{Walker2d - broken right foot:} We modify the rotation range of the joint on the foot of the right leg from $\left[-45, 45\right]$ to $\left[-0.45, 0.45\right]$.
\begin{lstlisting}[language=XML]
<joint axis="0 -1 0" name="foot_joint" pos="0 0 0.1" range="-0.45 0.45" type="hinge"/>
\end{lstlisting}
\vspace{-1em}
\subsection{Morphology Shift Tasks}
Detailed modifications of the environments with morphology shifts are shown below:

\textbf{HalfCheetah - no thighs:} We modify the size of both thighs. Detailed modifications of the xml file are:
\begin{lstlisting}[language=XML]
<geom fromto="0 0 0 -0.0001 0 -0.0001" name="bthigh" size="0.046" type="capsule"/>
<body name =" bshin" pos=" -0.0001 0 -0.0001 ">
\end{lstlisting}
\begin{lstlisting}[language=XML]
<geom fromto="0 0 0 0.0001 0 0.0001" name="fthigh" size="0.046" type="capsule"/>
<body name="fshin" pos="0.0001 0 0.0001">
\end{lstlisting}

\textbf{Hopper - big head:} We modify the size of the head. Detailed modifications of the xml file are:
\begin{lstlisting}[language=XML]
<geom friction="0.9" fromto="0 0 1.45 0 0 1.05" name="torso_geom" size="0.125" type="capsule"/>
\end{lstlisting}

\textbf{Walker - no right thigh:} We modify the size of thigh on the right leg. Detailed modifications of the xml file are:
\begin{lstlisting}[language=XML]
<body name="thigh" pos="0 0 1.05">
  <joint axis="0 -1 0" name="thigh_joint" pos="0 0 1.05" range="-150 0" type="hinge"/>
  <geom friction="0.9" fromto="0 0 1.05 0 0 1.045" name ="thigh_geom" size ="0.05" type="capsule"/>
  <body name="leg" pos="0 0 0.35">
    <joint axis="0 -1 0" name="leg_joint" pos="0 0 1.045" range="-150 0" type="hinge"/>
    <geom friction="0.9" fromto="0 0 1.045 0 0 0.3" name="leg_geom" size="0.04" type="capsule"/>
    <body name="foot" pos="0.2 0 0">
        <joint axis="0 -1 0" name="foot_joint" pos="0 0 0.3" range="-45 45" type="hinge"/>
        <geom friction="0.9 " fromto="-0.0 0 0.3 0.2 0 0.3" name="foot_geom" size="0.06" type="capsule"/>
    </body>
  </body>
</body>
\end{lstlisting}

\section{Implementation Details}
\label{app:implementation details}
\subsection{Baselines}
We select DARA, SRPO, BOSA as our baselines in cross-domain offline RL tasks and choose some typical offline RL including BCQ, CQL, MOPO, IQL, SPOT as our backbones. We adopt these offline RL of open source code implemented by CORL (\href{https://github.com/tinkoff-ai/CORL}{github}). We run all algorithms with the same five random seeds. 

\textbf{DARA.} We follow the default configurations of the public implementation (\href{https://openreview.net/attachment?id=9SDQB3b68K&name=supplementary_material}{openreview}). A pair of binary classifiers $p(tar \mid s, a, s')$ and $p(tar \mid s, a)$ are learned to infer whether transitions come from the source or target domain. And the domain classifiers are trained by maximizing the cross-entropy losses:
\begin{equation}
\nonumber
\begin{gathered}
J\left(\psi_{S A S}\right) :=\mathbb{E}_{\left(s, a, s'\right) \sim D_{t a r}}\left[\log q_{\psi_{S A S}}\left(tar \mid s, a, s'\right)\right] +\mathbb{E}_{\left(s, a, s'\right) \sim D_{s r c}}\left[\log \left(1-q_{\psi_{S A S}}\left(tar \mid s, a, s'\right)\right)\right] \\
J\left(\psi_{S A}\right) :=\mathbb{E}_{\left(s, a\right) \sim D_{\text {tar}}}\left[\log q_{\psi_{S A}}\left(tar \mid s, a\right)\right]+\mathbb{E}_{\left(s, a\right) \sim D_{s r c}}\left[\log \left(1-q_{\psi_{S A}}\left(tar \mid s, a\right)\right)\right]
\end{gathered}
\end{equation}
Applying Bayes' rule, a reward correction $\Delta r\left(s, a\right)$ is augmented to the original reward $r\left(s, a\right)$ of each source domain transition during training, i.e. $\tilde{r}\left(s, a\right):=r\left(s, a\right)+\Delta r\left(s, a\right)$. The reward correction is calculated by:
$$
\Delta r\left(s, a\right):=\log \frac{\hat{P}_{\text{\rm tar}}(s' \mid s, a)}{\hat{P}_{\text{\rm src}}(s' \mid s, a)} = \log \frac{q_{\psi_{S A S}}\left(tar \mid s, a, s^{\prime}\right)}{q_{\psi_{S A S}}\left(src\mid s, a, s^{\prime}\right)} \frac{q_{\psi_{S A}}(src \mid s, a)}{q_{\psi_{S A}}(tar \mid s, a)}
$$
In practical implementation, they also clip the above reward modification between -10 and 10.

\textbf{SRPO.} We implement it based on the pseudocode and default parameters provided in the paper (\href{https://arxiv.org/pdf/2306.03552}{origin paper}). SRPO samples a batch $\mathcal{D}_{\text{batch}}$ from $\mathcal{D}_{\text{off}}$ and $\mathcal{D}_{\text{rollout}}$ and rank them by state-values. Next, SRPO trains a GAN-style discriminator to selectively choose high state-value transitions as real data and low state-value transitions as fake data. For a one-step transition $(s_{t+1}, r_t, s_t, a_t)$ in $\mathcal{D}_{\text{batch}}$, update $r_t$ with $r_t + \lambda \frac{\mathcal{D}{\delta}(s_t)}{1-\mathcal{D}_{\delta}(s_t)}$.

\textbf{BOSA.} BOSA employs two support-constrained objectives to address the out-of-distribution issues which can greatly improve offline data efficiency in cross-domain offline RL setting. Although the code is not open-source, BOSA utilizes a portion of the dataset that aligns with ours. Therefore, we directly compare our results with the scores reported in their paper(\href{https://arxiv.org/pdf/2306.12755}{origin paper}). The support optimization objectives are implemented by:
$$
\max _{\pi_\theta} \mathcal{J}_{\mathcal{D}_{\text {mix }}}\left(\pi_\theta\right):=\mathbb{E}_{\mathbf{s} \sim \mathcal{D}_{\text {mix }}, \mathbf{a} \sim \pi_\theta(\mathbf{a} \mid \mathbf{s})}\left[Q_\phi(\mathbf{s}, \mathbf{a})\right], \text { s.t. } \mathbb{E}_{\mathbf{s} \sim \mathcal{D}_{\text {mix }}}\left[\log \hat{\pi}_{\beta_{\text {mix }}}\left(\pi_\theta(\mathbf{s}) \mid \mathbf{s}\right)\right]>\epsilon_{\mathrm{th}}
$$
$$
\min _{Q_\phi} \mathcal{L}_{\text{mix}}\left(Q_\phi\right):=\mathbb{E}_{\substack{\left(\mathbf{s}, \mathbf{a}, r, \mathbf{s}^{\prime}\right) \sim \mathcal{D}_{\text{mix}} \\ \mathbf{a}^{\prime} \sim \pi_\theta\left(\mathbf{a}^{\prime} \mid \mathbf{s}^{\prime}\right)}}\left[\delta\left(Q_\phi\right) \cdot \mathbbm{1}\left(\hat{T}_{\text{target}}\left(\mathbf{s}^{\prime} \mid \mathbf{s}, \mathbf{a}\right)>\epsilon_{\text{th}}^{\prime}\right)\right]+\mathbb{E}_{(\mathbf{s}, \mathbf{a}) \sim \mathcal{D}_{\text{source}}}\left[Q_\phi(\mathbf{s}, \mathbf{a})\right]
$$

\subsection{Hyperparameters}
The hyperparameters of our backbone offline RL remain unchanged and are fixed in all tasks following the original paper. We list the basic hyperparameters of our algorithm and baselines in Table \ref{tab:hyperparameters}.
\vspace{-1em}

\begin{table*}[ht]
\centering
\caption{Hyper-parameters used for IQL, IGDF, DARA, and SRPO.}
\label{tab:hyperparameters}
\resizebox{0.7\textwidth}{!}{%
\begin{tabular}{lc}
\toprule
\textbf{IQL hyper-parameter}                & \textbf{Value}        \\ \midrule
Hidden layers (Value and Policy)                       & 2$(\text{ReLU})$      \\  
Hidden units                                & 256$(\text{MLP})$     \\
Optimizer                                   & Adam                  \\
Batch size                                  & 256                   \\
Replay buffer capacity                      & 2e6                   \\ 
Discount factor $\gamma$                    & 0.99                  \\
Target network update rate                  & 0.005                 \\
Inverse temperature  $\beta$                & 3.0                   \\
Coefficient for asymmetric loss $\tau$      & 0.7                   \\
V function learning rate                    & 3e-4                  \\
Critic learning rate                        & 3e-4                  \\
Actor learning rate                         & 3e-4                  \\ \midrule
\textbf{IGDF hyper-parameter}               & \textbf{Value}        \\ \midrule
Representation dimension $d$                & 16 or 64              \\
Contrastive encoder arch. $\phi(s,a)$           & $\text{dim}(S) + \text{dim}(A) \rightarrow 256 \rightarrow 256 \rightarrow d (\text{MLP})$ \\
Contrastive encoder arch. $\psi(s)$           & $\text{dim}(S) \rightarrow 256 \rightarrow 256 \rightarrow d (\text{MLP})$ \\
Optimizer                                   & Adam                  \\ 
Info learning rate                          & 3e-4                  \\
Info batch size                             & 128                   \\
Update number                               & 7000                  \\ 
importance coefficient $\alpha$             & 1.0                   \\ 
data selection ratio $\xi$                  & 0.25 or 0.75          \\  \bottomrule
\textbf{DARA hyper-parameter}      & \textbf{Value}                 \\ \midrule
Classifier(s,a) arch. $f(s,a)$       & $2\text{dim}(S)+ \text{dim}(A)\rightarrow 256 \rightarrow 256 \rightarrow 256 \rightarrow 2 (\text{MLP with tanh})$ \\
Classifier(s,a,s') arch. $f(s,a,s')$ & $\text{dim}(S)+ \text{dim}(A)\rightarrow 256 \rightarrow 256 \rightarrow 256 \rightarrow 2 (\text{MLP with tanh})$  \\
Optimizer                          & RMSprop                                                                                                                                                  \\
Learning rate                      & 3e-4                                                                                                                                                     \\
batch size                         & 256                                                                                                                                                      \\
Update number                      & 5000                                                                                                                                                     \\
Delta coefficient                  & 0.1                                                                                                                                                      \\ \midrule
\textbf{SRPO hyper-parameter}      & \textbf{Value}                                                                                                                                                    \\ \midrule
Hidden layers                      & 2$(\text{ReLU})$                                                                                                                                                        \\
Hidden units                       & 256$(\text{MLP})$                                                                                                                                                      \\
Optimizer                          & Adam                                                                                                                                                     \\
Learning rate                      & 3e-4                                                                                                                                                     \\
Data selection ratio               & 0.5 or 0.2                                                                                                                                                    \\
Delta coefficient $\lambda$        & 0.1 or 0.3                                                                                                                                               \\ \bottomrule
\end{tabular}%
}
\end{table*}

\newpage
\section{Supplementary Experiments}
\subsection{Ablation Study}
\label{app:additional ablation study}
\paragraph{Data Selection Ratio $\xi$.}
As the dynamics gap between source and target domains vary in different task environments, the data selection ratio becomes particularly important. We employ different data selection ratio (25\%, 50\%, 75\%, 100\%) for our algorithms. Specifically, a ratio of 100\% means that we directly learn from the mixed dataset with all source domain samples (\textit{w/o Aug}). The results shown in Figure \ref{fig:abl_data_selection} demonstrate that different tasks have varying degrees of sensitivity to dynamics gap. As expected, when we set the data selection ratio to 100\%, the performance of IGDF degrades dramatically. We observe that the \textit{Halfcheetah} and \textit{Hopper} environments are more suitable for smaller sampling ratios ($\xi = 25\%$), while the Walker2d environment is more suitable for a relatively large sampling ratio ($\xi = 75\%$). This underscores the importance of configuring the data selection ratio to achieve more robust performance when facing different dynamics gaps.

\begin{figure}[ht]
    \centering
    \includegraphics[width=0.9\textwidth]{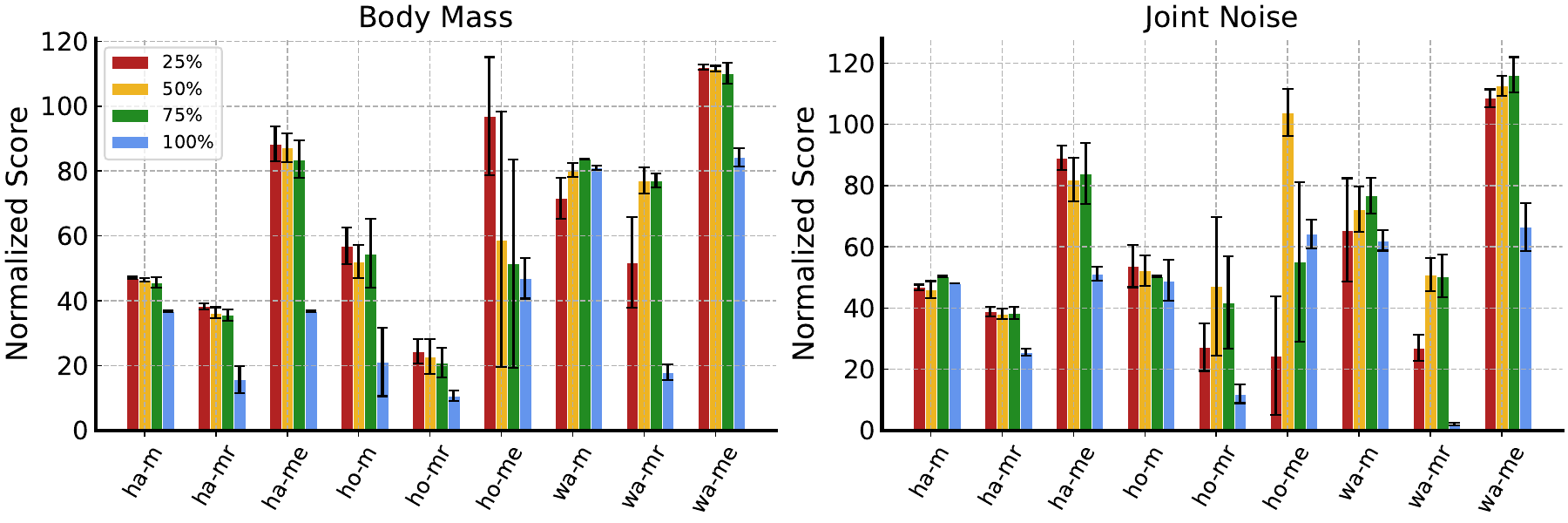}
    \vspace{-1em}
    \caption{Sensitivity on data selection ratio.}
    \label{fig:abl_data_selection}
\end{figure}

\paragraph{Representation Dimension $d$}
Equipping RL algorithms with additional representation learning components has proven effective for task solving. We employ various representation dimensions ($d = 16, 32, 64$) for encoder networks. As illustrated in Figure \ref{fig:abl_repre_dim}, we observe that the representation dimension does not have a monotonic impact on algorithm performance (a larger representation dimension does not necessarily correlate with better performance in most experiments). Moreover, larger representation dimensions even may lead to information redundancy.

\begin{figure}[ht]
    \centering
    \includegraphics[width=0.9\textwidth]{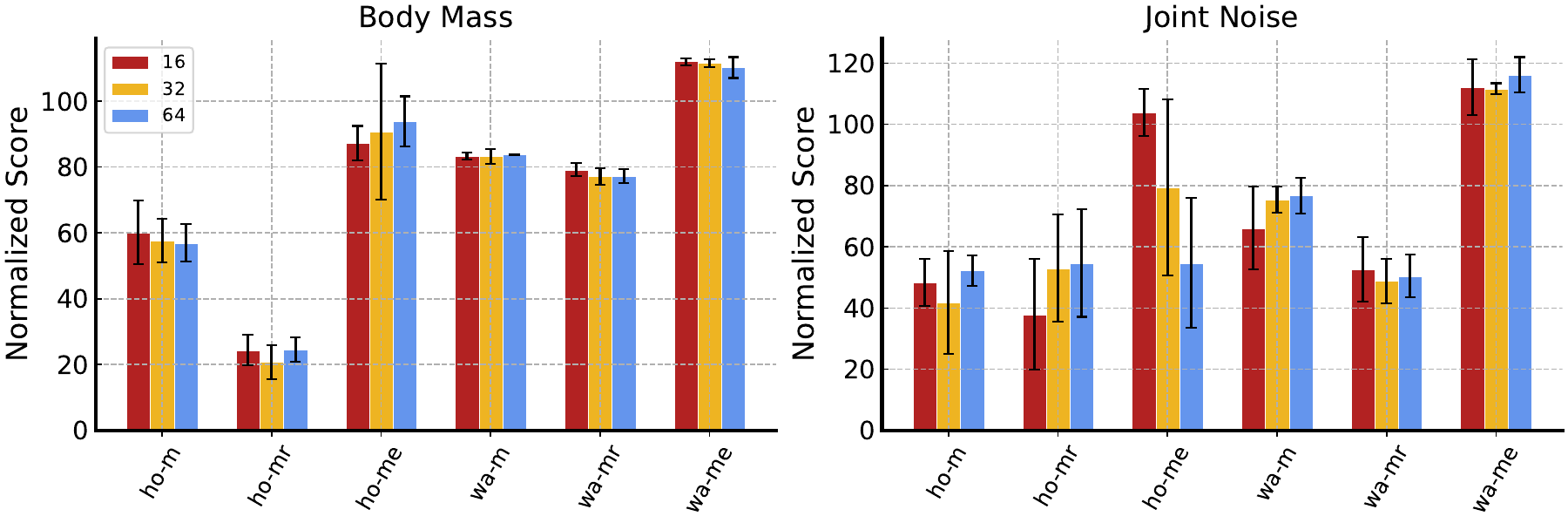}
    \vspace{-1em}
    \caption{Sensitivity on the representation dimension.}
    \label{fig:abl_repre_dim}
    \vspace{-1em}
\end{figure}

\paragraph{Weight of TD loss $h$} The weight of TD loss $h(s, a, s')$ serves as the measurement of the information density ratio, which has played an important role in further improving policy performance and training stability. It can distinguish the differences between the filtered data in a fine-grained manner by increasing the weight coefficients of samples with a smaller MI-Gap, thereby further improving learning efficiency. To assess the efficacy of using the weight $h(s, a, s')$, we perform an ablation analysis as shown in Table \ref{tab:ablation h} to evaluate the performance of IGDF without this weight.

\begin{table}[ht]
\centering
\caption{Comparative performance of using weight $h$ and without using it on body mass shift and joint noise shift tasks.}
\label{tab:ablation h}
\resizebox{0.5\textwidth}{!}{
\begin{tabular}{lll|lll}
\toprule
mass                      & w/o weight $h$ & IGDF & joint                    & w/o weight $h$ & IGDF \\ \midrule
ha-m             & 47.01±0.38     & \textbf{47.10±0.38}   & ha-m           & 46.07±2.72     & \textbf{47.84±0.76}   \\ 
ha-mr      & 38.34±0.92     & \textbf{38.76±0.88}   & ha-mr    & 38.06±1.70     & \textbf{39.11±0.55}   \\ 
ha-me      & 88.34±5.34     & \textbf{89.53±2.72}   & ha-me    & 81.97±7.10     & \textbf{90.93±3.21}   \\ 
ho-m                  & 56.90±5.72     & \textbf{63.78±8.43}   & ho-m                 & 52.17±5.08     & \textbf{54.04±7.89}   \\ 
ho-mr           & 22.74±5.32     & \textbf{27.84±9.36}   & ho-mr          & 47.07±22.75    & \textbf{63.07±27.96}  \\
ho-me           & \textbf{96.88±18.25}  & 93.82±7.63      & ho-me          & \textbf{103.97±7.68}  & 95.69±13.7       \\ 
wa-m                & \textbf{83.76±0.14}   & 82.60±1.02      & wa-m               & 76.70±5.77     & \textbf{78.76±2.74}   \\ 
wa-mr         & 77.15±2.09     & \textbf{79.19±1.31}   & wa-mr        & 50.82±5.39     & \textbf{58.38±10.55}  \\ 
wa-me         & 110.17±3.18    & \textbf{112.10±0.78}  & wa-me        & \textbf{116.19±5.76}  & 116.13±5.86      \\ \midrule
\textbf{Sum}               & 621.29         & \textbf{634.72}       & \textbf{Sum}               & 613.02         & \textbf{643.95}       \\
\textbf{Average}           & -14.32\%       & \textbf{-12.35\%}     & \textbf{Average}           & -16.55\%       & \textbf{-12.26\%}     \\ \bottomrule
\end{tabular}}
\end{table}


\subsection{Additional Experiment Results}
\paragraph{Reward modification variant} To evaluate the efficacy of the reward modification variant in our algorithm, we compare the performance of IGDF with the reward modification variant. In the reward modification approach, a reward correction term $\Delta r\left(s_t, a_t\right)$ is added to the original reward $r\left(s_t, a_t\right)$ for each source domain transition. This results in the modified reward $\tilde{r}\left(s_t, a_t\right):=r\left(s_t, a_t\right)+ \sigma \Delta r\left(s_t, a_t\right)$, where the reward correction is computed as $\phi(s_{\text{\rm src}}, a_{\text{\rm src}})^T\psi(s'_{\text{\rm src}})$. As depicted in Table \ref{tab:compare reward modification}, our observations indicate that the data filtering method exhibits significant advantages.

\begin{table}[ht]
\centering
\caption{Comparative performance of IGDF and the reward modification variant on body mass shift and joint noise shift tasks.}
\label{tab:compare reward modification}
\resizebox{\textwidth}{!}{%
\begin{tabular}{lllll|lllll}
\toprule
mass         & IGDF                   & $\sigma = 0.5$                   & $\sigma = 1.0$           & $\sigma = 2.0$           & joint        & IGDF                   & $\sigma = 0.5$                    & $\sigma = 1.0$            & $\sigma = 2.0$           \\ \midrule
ha-m         & \textbf{47.10 ± 0.38}  & 46.09 ± 1.84          & 45.95 ± 1.77  & 46.07 ± 0.67  & ha-m         & \textbf{50.40 ± 0.36}  & 49.53 ± 1.24           & 46.83 ± 0.62   & 46.68 ± 0.18  \\
ha-mr        & \textbf{38.76 ± 0.88}  & 37.99 ± 0.68          & 37.25 ± 1.24  & 36.44 ± 0.89  & ha-mr        & 39.11 ± 0.55           & \textbf{39.36 ± 0.34}  & 38.61 ± 2.00   & 37.64 ± 1.30  \\
ha-me        & \textbf{89.53 ± 2.72}  & 88.83 ± 3.24          & 86.37 ± 1.84  & 85.46 ± 6.25  & ha-me        & \textbf{90.93 ± 3.21}  & 83.86 ± 1.29           & 84.70 ± 1.201  & 82.06 ± 5.60  \\
ho-m         & \textbf{63.78 ± 8.43}  & 62.02 ± 6.56          & 59.48 ± 7.92  & 53.16 ± 4.92  & ho-m         & \textbf{54.04 ± 7.89}  & 50.63 ± 3.66           & 51.28 ± 2.41   & 48.07 ± 4.40  \\
ho-mr        & \textbf{27.84 ± 9.36}  & 24.25 ± 2.88          & 23.60 ± 6.13  & 20.58 ± 3.22  & ho-mr        & \textbf{63.07 ± 27.96} & 44.78 ± 14.78          & 41.74 ± 12.10  & 26.61 ± 5.99  \\
ho-me        & \textbf{96.88 ± 18.25} & 53.30 ± 45.00         & 78.65 ± 30.59 & 93.31 ± 18.71 & ho-me        & \textbf{103.97 ± 7.68} & 89.95 ± 23.13          & 101.64 ± 10.12 & 96.00 ± 15.55 \\
wa-m         & 83.76 ± 0.14           & \textbf{84.75 ± 0.28} & 66.78 ± 8.90  & 78.33 ± 1.60  & wa-m         & \textbf{78.76 ± 2.74}  & 78.58 ± 5.13           & 74.99 ± 13.73  & 80.79 ± 3.31  \\
wa-mr        & \textbf{79.19 ± 1.31}  & 78.20 ± 1.47          & 78.81 ± 2.58  & 78.79 ± 1.50  & wa-mr        & \textbf{58.38 ± 10.55} & 42.36 ± 9.15           & 47.80 ± 8.88   & 51.87 ± 10.01 \\
wa-me        & 112.10 ± 0.78          & 110.55 ± 2.01         & 111.93 ± 0.66 & 111.81 ± 0.74 & wa-me        & 116.19 ± 5.76          & \textbf{121.53 ± 0.02} & 119.34 ± 2.88  & 114.37 ± 2.56 \\ \midrule
\textbf{Sum} & \textbf{638.85}                 & 585.98                & 588.82        & 603.95        & \textbf{Sum} & \textbf{654.84}                 & 600.58                 & 606.912        & 584.09        \\ \bottomrule
\end{tabular}%
}
\end{table}

\paragraph{Online learning with limited target-domain data} In order to highlight the broader applicability of our work to another related line of research, we conducted additional experiments in offline-to-online settings compared with H2O \cite{h2o}. To assess the performance of H2O and IQL in online learning with limited offline data, we perform the online interactions with the source domain for $10^6$ steps and use $10^5$ target-domain transitions. For the sake of fairness, we select IQL as the backbone for IGDF and H2O. The comparison results are shown in Table~\ref{tab:compare h2o}.

\begin{table}[ht]
\centering
\caption{Comparative performance of IGDF and H2O on body mass shift and morphology shift tasks.}
\label{tab:compare h2o}
\resizebox{0.5\textwidth}{!}{%
\begin{tabular}{lll|lll}
\toprule
broken      & H2O   & IGDF       & morph & H2O   & IGDF       \\ \midrule
ha-m               & 5261 ± 76            & 5395 ± 32            & ha-m                  & 5246 ± 207          & 5351 ± 169                            \\ 
ha-mr        & 4505 ± 150           & 4469 ± 141           & ha-mr                  & 4631 ± 53           & 4512 ± 147                            \\ 
ha-me        & 8671 ± 840           & 9359 ± 553           & ha-me                  & 8807 ± 1442         & 9890 ± 874                             \\ 
ho-m                    & 1643 ± 260           & 1771 ± 339           & ho-m                  & 1642 ± 107          & 1686 ± 240                            \\ 
ho-m             & 463 ± 56             & 616 ± 257            & ho-mr                  & 417 ± 39            & 431 ± 34                               \\ 
ho-me             & 1920 ± 1057          & 2676 ± 365           & ho-m                  & 1456 ± 572          & 1773 ± 1083                       \\
wa-m                 & 3449 ± 237           & 3330 ± 528           &  wa-m                 & 3254 ± 309          & 3226 ± 538                      \\ 
wa-mr           & 404 ± 219            & 493 ± 86             &  wa-mr                 & 722 ± 182           & 630 ± 146                             \\ 
wa-me          & 4809 ± 130           & 4957 ± 99            & wa-me                  & 4247 ± 425          & 4919 ± 147                     \\ \midrule
\textbf{Sum}                        & 31125                & 33066                & \textbf{Sum}                  & 30422               & 32418 \\ \bottomrule
\end{tabular}%
}
\end{table}

\subsection{More discussions}
\label{app:more discussions}
\textbf{Question 1:} The inherent assumption of the behavior policy limits the applicability of the IGDF algorithm.

We recall the relationship between the MI gap and the dynamics gap in Equation (\ref{eq:deltaI-s}): $$\Delta I = \underbrace{D_{\text{KL}} [\hat{\rho}_{\text{src}}(s') || \hat{\rho}_{\text{tar}}(s')]}_{(a)}-\underbrace{D_{\text{KL}}[\hat{P}_{\text{src}}(s'|s,a) || \hat{P}_{\text{tar}}(s'|s,a)]}_{(b)},$$ when we use data shared from the source domain (i.e., $\mathcal{D}_{\text{src}}$) to estimate the MI gap. If the behavior policies of the two datasets are very different, the estimation of $\hat{\rho}_{\text{tar}}$ for $s'\sim \mathcal{D}_{\text{src}}$ can be difficult since the target-domain policy may never encounter similar states when interacting with the target domain, which makes $D_{\text{KL}}(\rho_{\text{tar}}(s') || \hat{\rho}_{\text{tar}}(s'))$ large, and the estimation of term (a) has a large bias. Similary, the estimation of $\hat{P}_{\text{tar}}(s'|s,a)$ for shared data $(s,a,s')\sim \mathcal{D}_{\text{src}}$ also contains large biases, which further increases the bias in estimating $\Delta I$ in data sharing.

Nevertheless, in our experiments, we find our method still achieves good results as long as there isn't a significant difference between the two behavior policies. Actually, even in data sharing between the same types of datasets (e.g., medium -> medium), the behavior policies are not entirely the same since the (medium) policies are trained in environments with dynamics gap. A more apparent evidence is shown in Table 3. In the broken and morphology tasks, we use $10^5$ D4RL transitions (medium, medium-replay, medium-expert) as our target-domain data and use $10^6$ replay transitions with SAC in every benchmark. IGDF can deliver a more robust performance and even achieve the SOTA results on 17 out of 18 tasks. We believe this assumption holds validity: if the discrepancy between the behavior policies of the two datasets is too large, the source-domain data will become useless in data sharing for the target domain.

\textbf{Question 2:} Why use linear parametrization instead of directly learning $h(s, a, s')$ in an end-to-end manner?

We choose to use linear parameterization instead of directly learning the function $h(s,a,s')$ in an end-to-end manner for several reasons: 1) Intuitively, the score function $\phi(s,a)^\top \psi(s')$ measures whether the representation of state-action pair $\phi(s,a)$ aligns with the next state $\psi(s')$. It is easier to solve a task with linear parameterization given a good representation. In our work, the representation can be separately learned via contrastive learning, which achieves better quantification. 2) As illustrated in Figure 11 of the related research \cite{darc}, solely employing the $(s,a,s')$ classifier to measure domain gaps significantly performs worse than simultaneously utilizing $(s,a,s')$ and $(s,a)$ classifiers. End-to-end learning shares a similar mechanism with solely learning the $(s,a,s')$ classifier. 3) Given what prior work has shown about RL in the presence of function approximation and state aliasing \cite{achiam2019towards,yang2022overcoming}, it is not surprising that end-to-end learning of representations is fragile \cite{laskin2020reinforcement}. RL algorithms require good representations to learn the value function and policy \cite{eysenbach2022contrastive}. 4) A recent work \cite{eysenbach2024inference} also highlighted that representations learned via InfoNCE can effectively capture conditional probabilities between random variables $x$ and $y$ (akin to the conditional probability between $(s,a)$ and $s'$ in our context).

\textbf{Question 3:} The comparison with low-rank MDPs.

Although the low-rank MDP is a theoretical-grounded assumption (i.e., $P(s'|s,a)=<\phi(s,a),\psi(s')>$) that improves the sample complexity \cite{uehara2021representation}, it can be hard to extend it to the cross-domain problem. As discussed in recent papers \cite{ren2022spectral,ren2022latent} that adopt low-rank assumption to learn representations with neural networks in high-dimensional space, the representation is learned by maximizing the likelihood as $\arg\max_{\phi,\psi}\sum\log\phi(s_i,a_i)^{\top}\psi(s')$. Then the representation $\phi(s,a)$ and $\psi(s')$ will learn to regress the transition probability in this domain. In cross-domain adaptation, if $\phi(s,a)$ and $\psi(s')$ are learned specially adapted to function $P_{\rm src}(s'|s,a)$ of the source domain, it can be hard to transfer $\phi(s,a)$ and $\psi(s')$ to the target domain since the transition probabilities of two domains are different. In contrast, the contrastive objective in our method is learned by both sampling positive sample and negative samples from both domains, which makes $h(s,a,s')=\exp(\phi(s,a)^{\top}\psi(s'))$ a score function to captures the domain-distinguishable information as a data filter. In our method, the learned representations $\phi(s,a)$ and $\psi(s')$ are not used for value/policy learning but only for data filtering.

\textbf{Question 4:} The comparison with offline multi-tasks transfer RL.

For the offline multi-task transfer problem studied in \cite{bose2024offline}, the source and target tasks are assumed to have similar transition functions to make a core assumption (i.e., Assumption 1) that all tasks share a common representation $\phi_h^\star(s,a)$ holds. However, in offline cross-domain RL considered in our paper, the representations (i.e., $\phi^\star_{\rm src}$ and $\phi^\star_{\rm tar}$) can be very different since the transition functions are very different in cross-domain settings with large domain gaps, which makes the error bound in representation transfer does not hold. Meanwhile, a pointwise linear span assumption (i.e., Assumption 2) is required in \cite{bose2024offline} to make the target transitions a linear combination of the source task dynamics. Similarly, \cite{ishfaq2024offline} also has assumptions about the shared representation $\phi^\star$, and the target task is assumed to be an $\xi$-approximated linear combination of $T$ source tasks. Nevertheless, such an assumption may not hold when facing a large dynamics gap, as we studied in our paper. Empirically, our method is robust to domain gaps and significantly outperforms other methods on 17 out of 18 tasks with large dynamics gaps (see Table 3). Another difference between our setting and \cite{bose2024offline,ishfaq2024offline} is that we do not adopt shared representation $\phi(s,a)$ for the shared domains, and the representation is only learned to capture the domain-distinguishable information as a data filter. As a result, we believe extending the theoretical results of \cite{bose2024offline,ishfaq2024offline} to cross-domain offline RL requires additional efforts to relax the assumptions to allow source and target domains to have different optimal representations. 

\end{document}